\documentclass[onefignum,onetabnum]{siamonline250211}
\usepackage{amsfonts}
\usepackage{graphicx}
\usepackage{epstopdf}
\usepackage{algorithmic}
\ifpdf
  \DeclareGraphicsExtensions{.eps,.pdf,.png,.jpg}
\else
  \DeclareGraphicsExtensions{.eps}
\fi
\usepackage{enumitem}
\setlist[enumerate]{leftmargin=.5in}
\setlist[itemize]{leftmargin=.5in}

\newsiamremark{remark}{Remark}
\newsiamremark{hypothesis}{Hypothesis}
\crefname{hypothesis}{Hypothesis}{Hypotheses}
\newsiamthm{claim}{Claim}
\newsiamremark{fact}{Fact}
\crefname{fact}{Fact}{Facts}
\headers{Inclusive KL Gradient Flows}{J.-J. Zhu}
\title{Inclusive KL Gradient Flows: Otto-Wasserstein,\\ Fisher-Rao-Gaussian, and Local-Estimator Dynamics\thanks{\funding{The author acknowledges the support from the Deutsche Forschungsgemeinschaft (DFG, German Research Foundation) as part of the priority programme ``Theoretical Foundations of Deep Learning'' (project number: 543963649).}}}
\author{Jia-Jie Zhu\thanks{Department of Mathematics, KTH Royal Institute of Technology, Lindstedtsvägen 25, 114 28, Stockholm, Sweden (\email{jiajie@kth.se}).}}
\usepackage{amsopn}

\newcommand{\WGF}{\mathrm{WGF}}
\newcommand{\WGFloc}{\ensuremath{\WGF_{\mathrm{loc}}}}

\newcommand{\EEE}{\color{black}}

\usepackage{preamble-JZ}

\usepackage{hyperref}       %
\usepackage{booktabs}       %
\usepackage{amsfonts}       %
\usepackage{nicefrac}       %
\usepackage{microtype}      %
\usepackage{xcolor}         %
\usepackage{graphicx}

\usepackage{amsmath,xcolor,mathrsfs}
\usepackage{hyperref}
\usepackage{cleveref}       %
\usepackage{bbm}
\usepackage{bm}
\usepackage{xspace}
\usepackage{wrapfig}
\usepackage{float}
\usepackage{algorithm}
\usepackage{algorithmic}

\newcommand{\Mplus}{\mathcal{M}^+}
\newcommand{\rkhs}{\mathcal{H}}

\usepackage{multirow} %
\usepackage[normalem]{ulem} %

\let\citep\cite
\let\citet\cite
\begin{document}
\maketitle
\begin{abstract}
Otto's Wasserstein gradient flow of the inclusive (forward) Kullback--Leibler (KL) divergence offers a principled framework for analyzing statistical inference algorithms, yet algorithms targeting the exclusive (reverse) KL divergence are rarely studied with such tools. We establish a unified gradient-flow and PDF framework for inclusive KL inference. We show that maximum mean discrepancy minimization can be viewed as inclusive KL inference with an approximate gradient estimator, and we develop the Fisher--Rao and Wasserstein--Fisher--Rao gradient flows that directly target the inclusive KL divergence. Restricting these flows to the manifold of Gaussian distributions yields explicit gradient-flow ODEs, providing a foundation for Gaussian variational inference.
Building on this viewpoint, we further introduce a local-estimator Wasserstein gradient flow whose velocity is obtained by local nonparametric regression---free of density-ratio evaluation or kernel gradients---improving the algorithmic performance over the MMD-based particle method.
\end{abstract}
\begin{keywords}
optimal transport, gradient flow, sampling, Bayesian inference, kernel methods, nonparametric regression
\end{keywords}
\begin{MSCcodes}
49Q22, 35Q49
\end{MSCcodes}

\section{Introduction}
\label{sec:intro}
Many inference and learning problems can be cast into the framework of
minimizing the Kullback-Leibler (KL) divergence
\begin{align}
    \min_{\mu \in A\subset \mathcal{P}} \DKL(\mu | \pi)
      .
      \label{eq:exclusiveKL}
\end{align}
Here, $\mathcal{P}$ denotes the space of probability distributions.
The functional $\DKL(\mu | \pi)$
is also known as the reverse KL divergence
between $\mu$ and $\pi$.
This variational problem forms the foundation of modern Bayesian inference \citep{zellner1988optimal}.
For example, suppose we have a model $p(\mathrm{Data}|\theta)$ and a prior $p(\theta)$, our goal is to infer the posterior $\pi(\theta) := p(\theta| \mathrm{Data})$.
If we further restrict the feasible set $A$ in \eqref{eq:exclusiveKL} to be the so-called variational family, e.g., the set of all Gaussian distributions, we obtain variational inference~\citep{jordanIntroductionVariationalMethods1999,wainwrightGraphicalModelsExponential2008,blei_variational_2017}.
Albeit much less popular,
there also exists the inference paradigm that minimizes
the forward KL divergence,
\begin{align}
    \min_{\mu \in A\subset \mathcal{P}} \mathrm{D}_\mathrm{KL}( \pi | \mu )
      .
      \label{eq:inclusiveKL}
\end{align}
The functional $\KL(\mu | \pi)$ in \eqref{eq:inclusiveKL}
is also known as the exclusive KL divergence, 
due to its well-known property commonly referred to as mode-seeking and zero-avoiding when the set $A$ is chosen as the Gaussian family.
For the same reason, 
$\mathrm{D}_\mathrm{KL}( \pi | \mu )$ is also known as the inclusive KL, due to its well-known property commonly referred to as mode-covering and zero-forcing.
\footnote{To avoid the potential confusion caused by different interpretation of the term forward and reverse, in this paper, we will use the term inclusive and exclusive instead.}
For example, algorithms such as expectation propagation~\citep{minkaExpectationPropagationApproximate2013}, \citep[Section~10.7]{bishop2006pattern}
can be viewed as solving \eqref{eq:inclusiveKL}.
Many researchers
such as
\citet{naessethMarkovianScoreClimbing2020,jerfelVariationalRefinementImportance2021,mcnamaraSequentialMonteCarlo2024,zhangTransportScoreClimbing2022}
have argued that the solution of \eqref{eq:inclusiveKL}, if available, offers statistical advantages over \eqref{eq:exclusiveKL}.
We also refer to the discussion in \citep{dhakaChallengesOpportunitiesHighdimensional2021}
about the behavior of inclusive KL variational inference in moderate-to-high dimensions.
However, most of the existing algorithms targeting inclusive KL minimization 
require adhoc procedures and do not have sound mathematical analysis as the backbone; see also our later discussion around \eqref{eq:vanilla-wasserstein-rkl-gfe}.
\begin{figure}[htbp]
    \centering
    \begin{minipage}[t]{0.32\textwidth}
        \vspace{0pt}
        \centering
        \includegraphics[width=\linewidth]{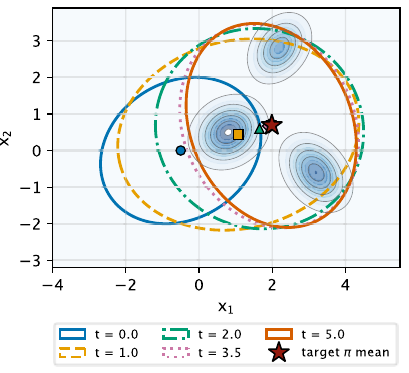}\\
        {\footnotesize(a)}
    \end{minipage}%
    \hfill%
    \begin{minipage}[t]{0.32\textwidth}
        \vspace{0pt}
        \centering
        \includegraphics[width=\linewidth]{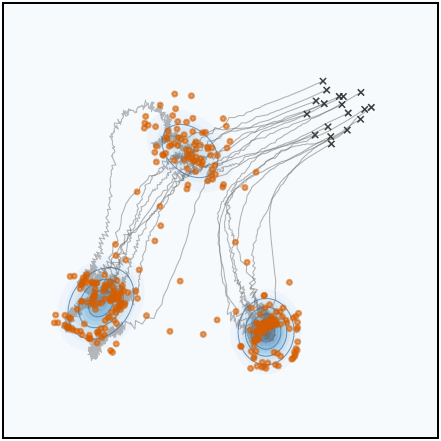}\\
        {\footnotesize(b)}
    \end{minipage}%
    \hfill%
    \begin{minipage}[t]{0.32\textwidth}
        \vspace{0pt}
        \centering
        \includegraphics[width=\linewidth]{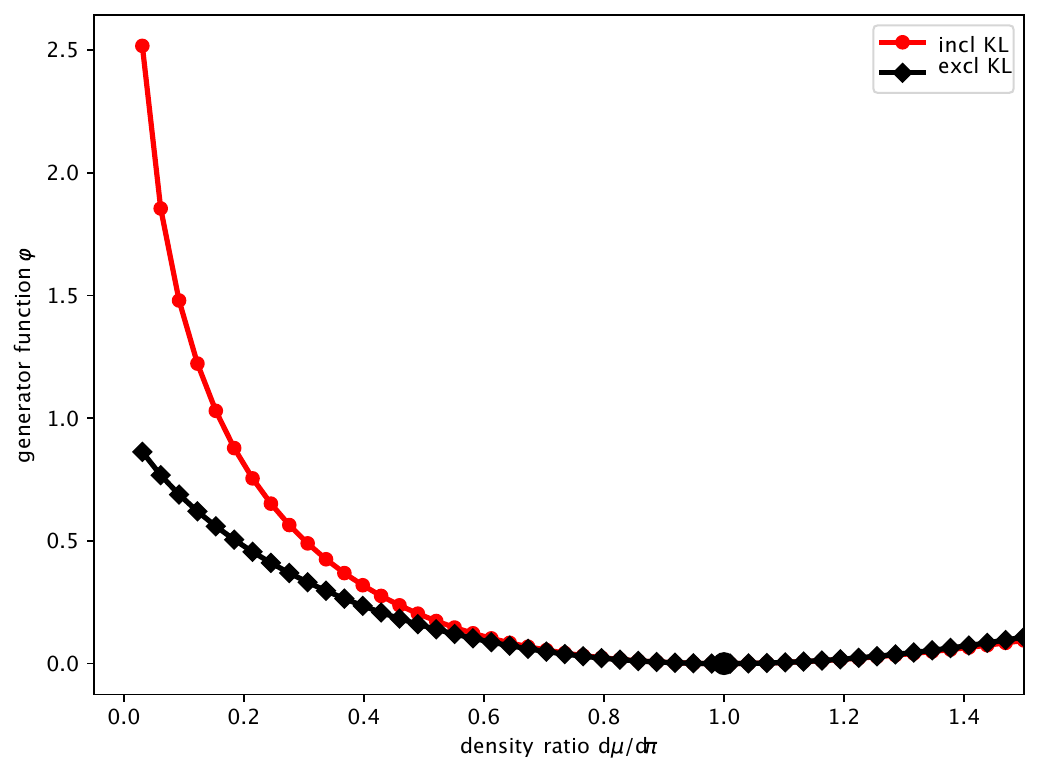}\\
        {\footnotesize(c)}
    \end{minipage}
    \caption{
    \emph{(a)} Evolution of the Fisher--Rao Gaussian gradient flow of the inclusive KL functional toward a mixture of three Gaussians; see Section~\ref{sec:gaussian-manifold}.
    \emph{(b)} Preview of the proposed local-estimator Wasserstein gradient flow of the \emph{inclusive} KL ($\mathrm{iKL}\text{-}\WGFloc$): initialization ($\times$), particles sweep in to cover all three modes of a Gaussian mixture; see Section~\ref{sec:local_estimate}.
    \emph{(c)} The generator $\varphi$ underlying the exclusive and inclusive KL divergences.}
    \label{fig:intro-overview}%
    \label{fig:fr-ode-intro}\label{fig:forward_reverse_kl}
\end{figure}
In comparison, 
there has been significant technical developments for the exclusive KL minimization~\eqref{eq:exclusiveKL} recently.
This is mainly due to the injection of rigorous theoretical foundation from
analysis of (PDE) gradient flows~\citep{otto1996double,ottoGeometryDissipativeEvolution2001,ambrosio2008gradient,peletier_variational_2014,mielke2023introduction}
and
statistical optimal transport~\citep{chewiStatisticalOptimalTransport2024,peyre2018computational,panaretosStatisticalAspectsWasserstein2019}.
Inference and sampling algorithms based on \eqref{eq:exclusiveKL} can now be studied under a unified framework and on the rigor level of applied analysis.
Can we use such principled theory
to study inclusive KL minimization?
This paper answers this question affirmatively.
Beyond placing existing methods on a rigorous analytical footing, our development also yields new constructions---explicit ODEs on the Gaussian manifold and a local-estimator Wasserstein gradient flow that exhibits competitive empirical performance.
See Figure~\ref{fig:fr-ode-intro} (a,b) for a visual illustration of an inclusive-KL-driven gradient flows proposed in this paper.

\paragraph*{Overview of main results}
Our starting point is the
Otto's Wasserstein gradient flow of the inclusive-KL divergence, generating the flow equation~\eqref{eq:vanilla-wasserstein-rkl-gfe}, a logarithmic-diffusion equation $\dot\mu=\DIV\!\big(\mu\,\nabla(1-\tfrac{\dd\pi}{\dd\mu})\big)$. 
We make the following two observations:
Replacing the force $1-\tfrac{\dd\pi}{\dd\mu}$ by its kernel-smoothed (\emph{force-kernelized} as in Definition~\ref{def:force-kernelization}) counterpart, the flow becomes \emph{exactly} the Wasserstein gradient flow of the squared MMD (Theorem~\ref{thm:equivalence-of-gradient-flow-equations}).
Thus MMD--Wasserstein gradient flow is inclusive-KL inference with a mollified gradient estimator.

The Fisher--Rao flow admits an MMD--MMD realization along which the MMD is a Lyapunov functional, without requiring log-Sobolev-type conditions (Theorem~\ref{thm:lyapunov-MMD}):
\begin{align*}
    \tfrac{\dd}{\dd t}\,\DKL(\pi\,|\,\mu_t)\ \le\ c\,\tfrac{\dd}{\dd t}\,\mmd^2(\mu_t,\pi),
    \qquad c>0 .
\end{align*}
Combining the two geometries gives global convergence of the unbalanced Wasserstein--Fisher--Rao flow.

On the Gaussian manifold the flows reduce to ODE systems for the mean--covariance pair $(m_t,\Sigma_t)$; the Fisher--Rao--Gaussian ODE system reads (Proposition~\ref{prop:gvi-gfe}, Figure~\ref{fig:fr-ode-intro})
\begin{align*}
    \dot m_t = -(m_t-m_\pi),
    \qquad
    \dot\Sigma_t = -\big(\Sigma_t-\Sigma_\pi-(m_\pi-m_t)(m_\pi-m_t)^\top\big),
\end{align*}
with an analogous Bures--Wasserstein system~\eqref{eq:gvi-bures-wasserstein-gfe-2}, providing a foundation for Gaussian variational inference under the inclusive KL divergence
with convergence analysis (Proposition~\ref{cor:fr-ode-solution}).

Finally, building on the force-kernelization principle (Definition~\ref{def:force-kernelization}), we replace the dual force vector by a local nonparametric regression estimator, yielding the flow~\eqref{eq:local-estimator-wgf}:
\begin{align*}
    \dot\mu = \DIV\!\big(\mu\,\hat\theta_1(x)\big),
    \quad
    (\hat\theta_0,\hat\theta_1)(x)=\!\!\argmin_{\theta_0\in\bbR,\,\theta_1\in\bbR^d}\sum_{i=1}^N\left(\dFdmut(x_i)-\theta_0-\theta_1^\top(x_i-x)\right)^2 K_h(x_i-x).
\end{align*}
We refer to the resulting dynamics as the local-estimator Wasserstein gradient flow, denoted as $\WGFloc$.
The estimator $\hat\theta_1$ needs neither the density ratio $\dd\pi/\dd\mu$ nor kernel gradients, and empirically converges faster and more stably over a broad range of bandwidths than the MMD-WGF of \citet{arbel_maximum_2019}, as previewed in Figure~\ref{fig:intro-overview}(b).
This demonstrates the potential of the principled inclusive KL inference enabled by our gradient flow theory.
\EEE
\section{Preliminaries}
\paragraph*{Notation}
By default, we work on the base space $\bbR^d$.
The measures that appear in this paper are by default assumed to be absolutely continuous with respect to the Lebesgue measure.
In formal derivation,
we use measures and their density interchangeably,
\ie$\int f\cdot \mu$ means the integral w.r.t. the measure $\mu$.
We use the notation $\mathcal{P}, \Mplus$ to denote the space of probability and non-negative measures on a domain that is a closed, bounded, convex subset of $\mathbb R^d$.
For many of our results, this domain can be generalized to $\bbR^d$
and the measures can be generalized to atomic measures; see \citep{ambrosio2008gradient} for more details.
The first variation of a functional $F$ at $\mu\in\Mplus$ is defined as a function ${\frac{\delta F}{\delta\mu}[\mu] }$
such that
$\frac{\dd }{\dd \epsilon}F(\mu + \epsilon \cdot v) |_{\epsilon=0}
= \int {{\frac{\delta F}{\delta\mu}[\mu] }}(x) \dd v (x)$
for any valid perturbation in measure $v$ such that $\mu + \epsilon \cdot v\in \Mplus$ when working with gradient flows over $\Mplus$ and 
$\mu + \epsilon \cdot v\in \mathcal P$ over $\mathcal P$.
To avoid confusion, we refer to the forward KL divergence $\DKL(\pi | \mu)$ as the inclusive KL; the reverse KL divergence $\DKL(\mu | \pi )$ as the exclusive KL.
The well-posedness of PDEs such as solution uniqueness, existence, and regularity are beyond the scope of this paper.
Without further specification,
the duality pairing notation $\langle f, g \rangle$ is the $L^2$ inner product
$\displaystyle\int f(x) g(x) \dd x$.

\paragraph*{Kullback-Leibler (KL) divergence}
The $\varphi$-divergence, also known as the f-divergence~\citep{csiszar1967information}, is a class of statistical divergences that measure the difference between a pair of measures,
defined as
\begin{equation}
    D_\varphi(\mu | \pi) = \int  \varphi\left(\frac{\dd\mu}{\dd\pi}(x)\right) \dd\pi(x), \text{ if } \mu \ll \pi
    \label{eq:general-divergence}
\end{equation}
and $+\infty$ otherwise; ${\dd \mu}/{\dd \pi}$ is the Radon-Nikodym derivative.
The entropy generator function $\varphi$ is a convex function satisfying
$\varphi(1)=\varphi'(1)=0, \ \varphi''(1)=1$.
Different choices of $\varphi$ lead to various well-known divergences, such as
exclusive KL: \  $\varphi_{\textrm{KL}}(s):=s\log s-s+1$, 
inclusive KL: \  $\varphi_{\textrm{revKL}}(s) = s-1 -\log s$,
\textrm{Hellinger:}\ $\varphi_{\textrm{H}}(s) =  (\sqrt{s} - 1)^2$,
$\chi^2$:\ $\varphi_{\chi^2}(s)=\frac{1}{2}(s-1)^2$.
Note that
our definition for measures that are 
not necessarily probability measures.
As a central topic of this paper, we will focus on the KL divergence evaluated in both directions, $\DKL(\mu|\pi)$ and $\DKL(\pi|\mu)$.
By elementary calculation, we observe that the forward and inclusive KL divergences have the same first-order expansion near the equilibrium point when $\mu=\pi$.
Figure~\ref{fig:forward_reverse_kl} shows the generator $\varphi$ of the exclusive and inclusive KL divergences.
Note that
entropy generator functions $\varphi$ can be made more general by the following power entropy,
\begin{equation}
    \label{eq:power-ent}
    \phiP(s):=\frac{1}{p(p-1)}\left(s^p-ps+p-1\right),\quad p\in \mathbb{R}\setminus\{0,1\}
    .
\end{equation}
Many commonly used divergences can be recovered using different choices of $p$.
The resulting
divergence functional $\mathrm{D_{\phiP}}$ is also called the $p-$relative entropy, e.g., inclusive KL $(p=0)$, exclusive KL $(p=1)$; cf. \citep{ohtaDisplacementConvexityGeneralized2011,mielke2025hellinger}.
Note an alternative parameterization of the entropy function can also be made by using the $\alpha$-divergence \citep{amariMethodsInformationGeometry2000}, defined by
$\varphi_\alpha(s) =
    \frac{4}{1-\alpha^2}\Bigl(1 - s^{\tfrac{1+\alpha}{2}}\Bigr), \alpha \in \mathbb{R}\setminus\{\pm 1\}$.
The divergence defined in \eqref{eq:general-divergence}, \eqref{eq:power-ent}
is notably studied in the PDE literature as its Wasserstein gradient flows generate the porous medium equation.

\paragraph*{Statistical inference as Wasserstein gradient flow of KL}
An elegant perspective of (Bayesian) inference is offered by the Wasserstein gradient flow (WGF) framework of \citet{otto1996double}, which has attracted much attention from researchers in Bayesian inference; see \citep{chewiStatisticalOptimalTransport2024,trillosBayesianUpdateVariational2018}
for recent surveys.
In that framework, one can write a flow equation formally as
\begin{align}
    \dot \mu =
    - \nabla _\OT F(\mu)
    =
    -  \bbKotto(\mu) \frac{\delta F}{\delta \mu}[ \mu]
    = \DIV\left(\mu\nabla \frac{\delta F}{\delta \mu}[ \mu]\right)
    .
    \label{eq:wasserstein-gfe}
\end{align}
through
the
Wasserstein Onsager operator $\bbKotto$, which is defined as the inverse of the Riemannian metric tensor $\mathbb G_\OT$
of the Wasserstein space, \ie $\bbKotto(\rho) = \mathbb G_\OT(\rho)^{-1}$.
Here,
the shorthand $\OT$
is due to the contribution of Felix Otto as well as the abbreviation of optimal transport.
Mathematically, for the Wasserstein space,
$$\bbKotto(\rho): T^*_\rho \calM \to T_\rho \calM, \xi \mapsto -\DIV(\rho\nabla \xi),$$
where $T_\rho \calM$ is the tangent space of $\Mplus$ at $\rho$ and
$T^*_\rho \calM$ the cotangent space.
The terminology Onsager's operator is due to the works of \citet{onsagerFluctuationsIrreversibleProcesses1953,onsagerReciprocalRelationsIrreversible1931}.
From the mechanics perspective, the dual functions $\xi$ can be interpreted as the generalized thermodynamic forces~\cite{onsagerFluctuationsIrreversibleProcesses1953,mielkeNonequilibriumThermodynamicalPrinciples2017}.
We provide more background on the Wasserstein gradient flows in Appendix.

With those ingredients, we can formally define the gradient systems that generate gradient flow equations such as \eqref{eq:wasserstein-gfe}.
\begin{definition}
    [Gradient system~\citep{ottoGeometryDissipativeEvolution2001,mielke2023introduction}]
    We refer to a tuple $\left( \calM, F, \bbK \right)$
    as a gradient system. 
    It has the gradient structure
    identified by:
\begin{tightenum}
        \item a space $\calM$,
        \item an energy functional $F$,
        \item a dissipation geometry given by either:
                a distance metric defined on $\calM$,
                a Riemannian metric tensor $\mathbb G$,
                or
                a symmetric positive-definite Onsager operator $\bbK=\mathbb G^{-1}$.
\end{tightenum}
\label{def:gradient-system}
\end{definition}
Note that it is also possible to define dissipation geometry via nonlinear dissipation potential functional;
cf.
\citep{mielkeNonequilibriumThermodynamicalPrinciples2017}.

Regarding Bayesian inference, 
we choose the 
energy functional as the exclusive KL divergence as in \eqref{eq:exclusiveKL},
\ie
$F(\mu) = \DKL(\mu | \pi)$.
Through elementary calculation,
we obtain from \eqref{eq:wasserstein-gfe} the Fokker-Planck equation (FPE)
\begin{align}
    \partial_t \mu =
    \DIV \left( \mu \nabla \log\frac{\mu}{\pi}  \right)
    =
    { \Delta \mu} - \DIV \left( \mu \nabla \log \pi  \right)
    .
    \tag{FPE}
    \label{eq:fokker-planck}
  \end{align}
When we express the target as $\pi(x) = \frac1Z \exp(-V(x))$
where $Z$ is a normalization constant (partition function), the \eqref{eq:fokker-planck} is then 
$\partial_t \mu = { \Delta \mu} + \DIV \left( \mu \nabla V  \right)$.
The fact that the evolution equation does not depend on the partition function $Z$ is often argued to be one of the key advantages of the KL divergence.
Viewed as a dynamic system, the KL divergence energy functional dissipates
along \eqref{eq:fokker-planck} in the steepest descent manner.
Based on the formal definition of gradient system~\eqref{def:gradient-system}, we say that \eqref{eq:fokker-planck} has the \emph{gradient structure} that entails the following key ingredients.
\begin{align}
    \begin{cases}
        \textrm{{Space} :}&
        \text{prob. space }
        \mathcal P 
        \\
        \textrm{Energy functional} :& {F(\cdot):= \mathrm{D}_\mathrm{KL}(\cdot | \pi)}
        \\
        \textrm{Dissipation Geometry} :& { \text{Wasserstein}\ \bbKotto}
    \end{cases}
    \label{eq:gradient-structure-fkl-wgf}
\end{align}

\paragraph*{Integral operator and maximum-mean discrepancy}
Given a positive measure $\rho$ on $\bbR^d$
and a positive-definite kernel $K$,
the integral operator
$\Tkrho: L^2_{\rho} \rightarrow \rkhs$ is defined by
\begin{align}
    \Tkrho g(x):=\int K\left(x, x^{\prime}\right) g\left(x^{\prime}\right) d \rho\left(x^{\prime}\right) \ \text{ for }\ g \in L^2_{\rho}
    ,
    \label{eq:def-int-op}
\end{align}
where $\rkhs$ is the reproducing kernel Hilbert space associated with the kernel $K$.
With a slight abuse of terminology, the following 
compositional operator
$\K_\rho:= \ID \circ \Tkrho$
is also referred to as the integral operator, albeit defined for $L^2(\rho) \to L^2(\rho)$.
$\K_\rho$ is compact, (semi-)positive, self-adjoint, and nuclear; cf. \citep{steinwart2008support,heinKernelsAssociatedStructures2004,steinwartMercersTheoremGeneral2012}.
The adjoint of $\Tkrho$ is the embedding operator 
$\ID :\rkhs \to L^2(\rho)$, \ie
$\langle \ID f, g \rangle_{L^2(\rho)} = \langle f, \Tkrho g \rangle_{\rkhs}$ for all $f\in\rkhs$ and $g\in L^2(\rho)$.
When using a kernel such as the Gaussian kernel, the image $\Tkrho g$
can be regarded as a smooth approximation of $g$, which is sometimes referred to as approximation by convolution or mollification~\citep{wendland_scattered_2004}.
An assumption we will generally make throughout the paper is that
the kernel $K$ is bounded, symmetric, and satisfies the \emph{integrally strict positive-definite} (ISPD) condition~\cite{JMLR:v11:sriperumbudur10a,steinwart2008support,stewartPositiveDefiniteFunctions1976}:
$\displaystyle\int K(x,x') \dd \rho(x) \dd \rho(x') > 0$
for any non-zero signed measure $\rho$.
The purpose of this condition is to ensure that the integral operator $\Tkrho$ is strictly positive-definite.
Though we also consider kernels typically used in applied math that are not ISPD.
In this paper,
the kernel maximum mean discrepancy (MMD)
between two positive measures $\mu$ and $\nu$ is defined as
\begin{multline*}
        \mmd^2( \mu, \nu): 
        =\iint K(x, x') \dd (\mu - \nu)(x) \dd (\mu - \nu)(x')
        \\
        =
          \underbrace{- \iint K(x, x') \dd\nu(x) \dd \mu (x')}_{\text{attract}}
         +  \underbrace{\iint K(x, x') \dd\mu(x) \dd \mu (x')}_{\text{repel}}
         +\ \const
\end{multline*}
We particualrly favor the second representation above due to the inteprpetation of the opposing physical mechanisms
in the context of gradient flows.

\section{Force-kernerlization: Wasserstein gradient flows of inclusive KL}
\label{sec:gradient-flows}
Our starting point is the following Wasserstein gradient flow equation of the inclusive KL inference,
derived using \citet{ottoGeometryDissipativeEvolution2001}'s formal calculation analogous to the exclusive KL case~\eqref{eq:wasserstein-gfe}.
\begin{align}
    \dot \mu 
    =
    \DIV\left(\mu \nabla \left(1-\frac{\dd \pi}{\dd \mu}\right)\right)
    .
    \label{eq:vanilla-wasserstein-rkl-gfe}
\end{align}
This equation is also referred to as the \emph{logarithmic diffusion equation}.
The relation with the exclusive KL gradient flow can be observed by comparing this PDE with \eqref{eq:fokker-planck}.
The two generalized force functionals
agree to the first order
near the equilibrium,
\ie
$\log \frac{\dd \mu}{\dd \pi} \approx 1 - \frac{\dd \pi}{\dd \mu}$ when $\dd \pi / \dd \mu \approx 1.$
Furthermore,
the generator function of the inclusive KL has a larger slope than that of the exclusive KL.
This subtle difference will lead to different behaviors of their gradient flows.
Rewriting the right-hand side of \eqref{eq:vanilla-wasserstein-rkl-gfe},
we obtain the PDE
$\dot \mu = -\Delta \pi + \DIV (\pi \nabla \log \mu)$,
which bears similarity to \eqref{eq:fokker-planck}
but with the position of $\pi$ and $\mu$ exchanged on the right-hand side.
Intuitively, the gradient structure of
\eqref{eq:vanilla-wasserstein-rkl-gfe} is given by:
\begin{align}
    \begin{cases}
        \textrm{{Space} :}& 
        \mathcal P 
        \\
        \textrm{Energy functional} :& {F(\cdot):= \mathrm{D}_\mathrm{KL}( \pi | \cdot)}
        \\
        \textrm{Dissipation Geometry} :& { \text{Wasserstein}\ \bbKotto}
        \end{cases}
        \label{eq:gradient-structure-rkl-wgf}
\end{align}
While the gradient structure is clear,
a main obstacle to implement the
gradient flow
is due to that the function $1-{\dd \pi}/{\dd \mu}$, which may not be accessible or differentiable.
To address this,
we consider a flow equation with a smooth approximation via the integral operator $\Tkmu$ defined in \eqref{eq:def-int-op}.
Importantly, we do not kernelize the
so-called Wasserstein velocity field
$\nabla (1-\frac{\dd \pi}{\dd \mu})$, but rather the \emph{force field} $1-\frac{\dd \pi}{\dd \mu}$.
We define the following:
\begin{definition}\label{def:force-kernelization}
    [Force-kernelization]
    We refer to the process of replacing
    dual force vector $\xi$,
    of the corresponding Wasserstein gradient flow equation,
    by the kernelized dual force vector $\Tkmu \xi$ as \emph{force-kernelization},
    i.e.,
    \begin{align}\label{eq:force-kernelization-of-wasserstein-gradient-flow}
        \dot \mu = \DIV(\mu \nabla \xi )
        \overset{\text{force-kernelize}}{
            \xrightarrow{\hspace{0.1em}\sim\!\!\sim\!\!\sim\!\!\sim\!\!\sim\!\!\sim\!\!\sim\!\!\sim\!\!\sim\!\!\sim\!\!\sim\!\!\sim\!\!\sim\!\!\sim\!\!\sim\!\!\sim\!\!\sim\!\!\sim\hspace{0.1em}}
        }
        \dot \mu = \DIV(\mu \nabla \Tkmu \xi )
        .
    \end{align}
\end{definition}
This is in contrast to the \emph{velocity-kernelization}commonly used in the literature.
The resulting \emph{force-kernelized flow equation}
of
\eqref{eq:vanilla-wasserstein-rkl-gfe}
is given by
\begin{align}
    \dot \mu = 
    \DIV\left(\mu \nabla \Tkmu \left(1-\frac{\dd \pi}{\dd \mu}\right)\right)
    .
    \label{eq:kernelized-gfe-reverseKL}
\end{align}
Note that a kernelized flow is not necessarily a gradient flow.
However, \eqref{eq:kernelized-gfe-reverseKL} is indeed a gradient flow and
has aleady been used in machine learning, albeit without noting the connection with inclusive KL.%
We observe the following.
\begin{theorem}
    [Flow equation~\eqref{eq:kernelized-gfe-reverseKL} has a Wasserstein gradient structure]
    Suppose that initial condition
    satisfies $\pi \ll \mu$,
    \ie $\pi$ is absolutely continuous with respect to $\mu$.
    Then, 
    \eqref{eq:kernelized-gfe-reverseKL}
    coincides with
    the Wasserstein gradient flow equation of the MMD~\eqref{eq:wgf-mmd-pde},
    \begin{align}
        \dot \mu = 
              \operatorname{div}\left(\mu 
              \int 
              \nabla_2 K(x,\cdot )\dd \left(\mu-\pi\right)(x)
              \right)
              \tag{MMD-WGF}
              \label{eq:wgf-mmd-pde}
    \end{align}
    \label{thm:equivalence-of-gradient-flow-equations}
\end{theorem}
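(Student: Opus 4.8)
The plan is to show that the two evolution equations \eqref{eq:kernelized-gfe-reverseKL} and \eqref{eq:wgf-mmd-pde} have \emph{literally the same} right-hand side; once that identity is established, the two flows coincide by definition. The only substantive ingredient is to evaluate the kernelized generalized force $\Tkmu\!\left(1-\frac{\dd\pi}{\dd\mu}\right)$ in closed form, so I would organize the argument around that computation.

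First I would unfold the integral operator from its definition \eqref{eq:def-int-op}. Writing $y$ for the free variable,
\[
  \Tkmu\!\left(1-\tfrac{\dd\pi}{\dd\mu}\right)\!(y)
  = \int k(y,x')\,\dd\mu(x') - \int k(y,x')\,\tfrac{\dd\pi}{\dd\mu}(x')\,\dd\mu(x').
\]
The hypothesis $\pi\ll\mu$ enters here, and only here: it makes the Radon--Nikodym derivative $\dd\pi/\dd\mu$ well defined and validates the change of measure $\int k(y,x')\,\tfrac{\dd\pi}{\dd\mu}(x')\,\dd\mu(x')=\int k(y,x')\,\dd\pi(x')$. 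Hence the kernelized force collapses to the MMD witness function,
\[
  \Tkmu\!\left(1-\tfrac{\dd\pi}{\dd\mu}\right)\!(y) = \int k(y,x')\,\dd(\mu-\pi)(x'),
\]
which is exactly the first variation $\tfrac{\delta}{\delta\mu}\big[\tfrac12\mmd^2(\cdot,\pi)\big][\mu]$ evaluated at $y$; a one-line side computation differentiating $\tfrac12\mmd^2(\mu+\epsilon v,\pi)$ at $\epsilon=0$ confirms this, and together with the general form \eqref{eq:wasserstein-gfe} it also shows that \eqref{eq:wgf-mmd-pde} is genuinely the Wasserstein gradient flow of the MMD, justifying its name.

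Next I would take the spatial gradient, pass it inside the integral, and use the symmetry of $k$, i.e. $\nabla_y k(y,x')=\nabla_2 k(x',y)$. After relabelling the dummy variable, $\nabla_y \Tkmu(1-\dd\pi/\dd\mu)(y)$ becomes $\int \nabla_2 k(x,\cdot)\,\dd(\mu-\pi)(x)$, precisely the vector field in \eqref{eq:wgf-mmd-pde}. Substituting back under $\DIV(\mu\,\nabla(\cdot))$ gives the claimed identity of right-hand sides, so \eqref{eq:kernelized-gfe-reverseKL} coincides with \eqref{eq:wgf-mmd-pde}, which by the previous paragraph carries the Wasserstein gradient structure of $\tfrac12\mmd^2(\cdot,\pi)$.

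There is no deep obstacle here: the content of the statement is the algebraic cancellation above. The points needing care are bookkeeping rather than analysis — tracking which argument of the symmetric kernel is differentiated (handled cleanly by the $\nabla_1/\nabla_2$ notation), and ensuring $1-\dd\pi/\dd\mu\in L^2(\mu)$, equivalently $D_{\chi^2}(\pi|\mu)<\infty$, so that $\Tkmu$ may be applied to it and the gradient–integral interchange is legitimate. I would record the latter as a standing regularity assumption, consistent with the paper's convention that the derivations are formal and PDE well-posedness is out of scope.
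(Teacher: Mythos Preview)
Your proposal is correct and follows essentially the same route as the paper: unfold the integral operator $\Tkmu$ on $1-\dd\pi/\dd\mu$, use $\pi\ll\mu$ to collapse it to $\int k(\cdot,x')\,\dd(\mu-\pi)(x')$, and observe that this is the driving force in \eqref{eq:wgf-mmd-pde}. Your write-up is in fact more careful than the paper's two-line verification, explicitly noting where the absolute-continuity hypothesis enters, the kernel-symmetry bookkeeping for $\nabla_2$, and the $L^2(\mu)$ regularity caveat.
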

where $\nabla _2$ denotes the differentiation with respect to the second variable.
Intuitively,
\eqref{eq:wgf-mmd-pde} and hence \eqref{eq:kernelized-gfe-reverseKL} have the same gradient structure:
\begin{align}
    \begin{cases}
        \textrm{{Space} :}& 
        \mathcal P 
        \\
        \textrm{Energy functional} :& {F(\cdot):=\frac12 \mmd^2(\cdot , \pi)}
        \\
        \textrm{Dissipation Geometry} :& { \text{Wasserstein}\ \bbKotto}
    \end{cases}
    \label{eq:gradient-structure-mmd-wgf}
\end{align}
As discussed above, the vanilla \eqref{eq:vanilla-wasserstein-rkl-gfe} cannot be directly used to derive algorithms due to the non-smooth nature of the function $1-{\dd \pi}/{\dd \mu}$.
Now, since \eqref{eq:kernelized-gfe-reverseKL}'s flow equation
coincides with \eqref{eq:wgf-mmd-pde},
our theory provides the insight: \emph{minimizing MMD through \eqref{eq:wgf-mmd-pde} is equivalent to simulating a kernelized Wasserstein gradient descent to minimize the inclusive KL}.
Then, we can make use of
numerous implementations that have already been developed for MMD-minimization,
see, e.g., \citep{arbel_maximum_2019,chizatMeanFieldLangevinDynamics2022,futamiBayesianPosteriorApproximation2019,hagemann2023posterior,neumayer2024wasserstein,galashovDeepMMDGradient2024,gladin2024interaction,chenDeregularizedMaximumMean2024,belhadji2025weighted}.
Thus, the dual-force-kernelized gradient flow~\eqref{eq:kernelized-gfe-reverseKL} provides an implementable approximation of \eqref{eq:vanilla-wasserstein-rkl-gfe}.

Summarizing the results so far,
we offer
insights into both score-based (e.g., requiring evaluation of the score function $\nabla \log \pi$) and sample-based (e.g., assuming access to samples from $\pi$) (Bayesian) inference and sampling, providing a unifying Wasserstein gradient flow perspective on these methods in Bayesian computation based on inclusive KL inference~\eqref{eq:inclusiveKL}.
Our insights provide a first-principles interpretation of these methods via gradient flows.
We also note that a wider class of gradient flows can be characterized using the kernel Stein discrepancy; see the appendix.

\section{Fisher-Rao gradient flows of the inclusive KL functional}
While recent machine learning applications primarily focus on the Wasserstein geometry,
we emphasize that the gradient flow theory is more general.
Prominent examples include the Fisher-Rao and Hellinger geometries~\citep{hellingerNeueBegrundungTheorie1909,kakutaniEquivalenceInfiniteProduct1948,rao1945information,bhattacharyyaMeasureDivergenceTwo1946}, which provide a different yet extremely impactful perspective
on statistical inference
and optimization.
They form an important building block for the Wasserstein-Fisher-Rao gradient flow in the next section.

In this subsection, we first analyze
and uncover a few remarkable properties of
the Fisher-Rao gradient flows of the inclusive KL divergence.
Then, we 
establish a precise
connection to the MMD gradient flow of the MMD functional.
This connection was not known prior to this work, yet some existing machine learning algorithms
have already provided empirical implications of such Fisher-Rao gradient flow.

Our starting point is to replace the Wasserstein dissipation geometry
in the gradient structure~\eqref{eq:gradient-structure-rkl-wgf}
with the Fisher-Rao dissipation geometry,
defined using the Fisher-Rao Onsager operator,
$$\mathbb K_\FR(\rho): T^*_\rho \calM \to T_\rho \calM, \xi \mapsto 
\rho\left(\xi - \int \xi \dd \rho\right).$$
The resulting gradient structure is
\begin{align}
    \begin{cases}
        \textrm{{Space} :}& 
        \text{prob. measures } \calP  
        \\
        \textrm{Energy functional} :& {
        \DKL(  \pi| \cdot)}
        \\
        \textrm{Dissipation Geometry} :& { \text{Fisher-Rao } \bbK_\FR}
    \end{cases}
    \label{eq:gradient-structure-revKL-SHe-gf-prob}
\end{align}
Note that the Fisher-Rao space is also
referred to as the spherical Hellinger space
by~\citet{LasMie19GPCA} 
considering the historical development.
Interestingly, under the inclusive KL divergence functional, the Hellinger gradient flow over $\Mplus$ stays within the probability space $\mathcal P$ if initialized therein,
\ie a projection is not needed in our case; see the appendix for more details.
Previously, flows in the Fisher-Rao space have been studied in ML applications under the name of birth-death dynamics; see \citep{luAcceleratingLangevinSampling2019,rotskoff2019neuron,kim2024transformers} for applications and further discussions.

We summarize some important properties of the Fisher-Rao gradient systems in the following.
As is already known due to the McCann condition, the inclusive KL is not generally geodesically convex in the Wasserstein geometry (see \cite{mielke2025hellinger} for details).
However, this notion of convexity has changed since we are now considering the Fisher-Rao geometry.

Here, we exploit a couple of recent analysis results, we now show the geodesic convexity of the inclusive KL divergence
along the Fisher-Rao gradient flow over both general measures.
Specifically,
\citet{carrillo2024fisher} showed that, for a general
$\varphi$-divergence $\rmD_\varphi(\cdot |\pi)$,
a sufficient condition for it to be geodesically $\lambda$-convex ($\lambda > 0$)
is that $\varphi''(s) = s^{p-2}$ for $p\leq 0$.
In fact, \citet{mielke2025hellinger} showed a refined necessary and sufficient condition for gradient dominance (Polyak-\L{}ojasiewicz inequality) to hold
with a positive constant $\lambda > 0$
for
$\varphi(s)=\frac{1}{p(p-1)}\left(s^p-ps+p-1\right)$
is $p\leq \frac12$ (note that $p=0$ recovers the inclusive KL divergence in the limit).
More concretely, from the analysis of \cite{carrillo2024fisher,mielke2025hellinger}, we directly conclude the following regarding the inclusive KL divergence.
\begin{theorem}\label{thm:fkl-geodesic-convexity}
    The inclusive KL divergence $\KL(\pi | \cdot)$ is geodesically $\lambda$-convex in the Fisher-Rao space over general probability measures $\left(\calP(\bbR^d), \FR\right)$ for some $\lambda > 0$.
    Furthermore, it satisfies a global gradient dominance (Polyak-\L{}ojasiewicz) condition for a strictly positive constant in the Fisher-Rao space.
\end{theorem}
We remind the readers again about the naming convention here: the Fisher-Rao space or manifold is used to refer to the space of general probability measures rather than parameter manifolds; see Remark~\ref{remark:distinction-between-FR-and-He}.
Later, we study the restricted gradient flow on the Gaussian manifold in Section~\ref{sec:gaussian-manifold}.

Continuing from Theorem~\ref{thm:fkl-geodesic-convexity}, 
the gradient flow equation can be explicitly solved to obtain:
\begin{proposition}
    [FR gradient flow equation of inclusive KL]
    The gradient structure \eqref{eq:gradient-structure-revKL-SHe-gf-prob} generates the flow equation
    \begin{align}
        \dot \mu 
            =
            \pi - \mu
            .
            \label{eq:gradient-structure-revKL-he-gf-reaction}
        \end{align}
    Its closed-form solution is given by
    \begin{align}
        \mu_t = e^{-t}\mu_0 + (1-e^{-t})\pi
        .
        \label{eq:gradient-structure-revKL-he-gf-sol}
    \end{align}
Furthermore,
    suppose the initial datum is a probability measure $\mu_0\in \calP$.
Then, the solution of the Hellinger flow equation over $\Mplus$ generated by the Hellinger gradient structure of the inclusive KL energy, $\left(\Mplus,\,\DKL(\pi|\cdot),\,\mathrm{Hellinger}\right)$ (see Appendix), coincides with that of the
Fisher-Rao (a.k.a. Spherical Hellinger) gradient structure \eqref{eq:gradient-structure-revKL-SHe-gf-prob}.
\label{prop:revKL-FR-gf-equiv}
\end{proposition}
This result
characterizes an interesting feature of the \emph{inclusive-KL-Fisher-Rao flow:
it traverses along a straight line despite the Riemannian structure of the Fisher-Rao geometry}.

As a consequence of 
the abstract result in Theorem~\ref{thm:fkl-geodesic-convexity},
the following result is immediate.
\begin{theorem}
[Exponential Decay of inclusive-KL divergence]
There exists a constant $c>0$ such that the following Polyak-\Loj functional
inequality holds globally.
\begin{align}
    \biggl\|1 - \frac{\dd \pi}{\dd\mu}\biggr\|^2_{L^2_{\mu}}
    \geq 
     c\cdot   \operatorname{\DKL}(\pi | \mu) 
     ,
     \quad \forall \mu \in \Mplus
     .
    \label{eq:Loj-FR}
\end{align}

Furthermore,
the inclusive KL
satisfies the exponential decay estimate
along the gradient flow
\[
\DKL(\pi | \mu(t)) \leq e^{- t} \DKL(\pi | \mu_0) \text{ for all }t > 0.
\]
\label{thm:exponential-decay-of-inclusive-KL-divergence}
\end{theorem}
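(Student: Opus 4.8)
The plan is to treat the two claims separately and in order. First I would prove the functional inequality~\eqref{eq:Loj-FR}, reducing it to an elementary one-variable estimate; then I would deduce the exponential decay by combining~\eqref{eq:Loj-FR} with the dissipation identity of the flow~\eqref{eq:gradient-structure-revKL-he-gf-reaction} and a Gr\"onwall argument. I would also note that the decay can alternatively be read off directly from the closed form~\eqref{eq:gradient-structure-revKL-he-gf-sol} using convexity of $\DKL(\pi|\cdot)$, which yields the same rate.

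\emph{The P\L{} inequality.} I would set $t := \tfrac{\dd\pi}{\dd\mu}(x)$ and use the $\varphi$-divergence representation with $\varphi_{\mathrm{KL}}$: then $\DKL(\pi|\mu) = \int (t\log t - t + 1)\,\dd\mu$, while the left-hand side of~\eqref{eq:Loj-FR} is $\bigl\|1-\tfrac{\dd\pi}{\dd\mu}\bigr\|_{L^2_\mu}^2 = \int (1-t)^2\,\dd\mu$. Hence~\eqref{eq:Loj-FR} holds pointwise in $x$ with constant $c=1$, provided the scalar inequality $(1-t)^2 \ge t\log t - t + 1$ holds for every $t \ge 0$. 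This is immediate: $(1-t)^2 - (t\log t - t + 1) = t\,(t - 1 - \log t) \ge 0$ since $\log t \le t-1$, with equality only at $t=1$ (i.e.\ $\mu=\pi$) and at $t=0$. If $\pi \not\ll \mu$ both sides of~\eqref{eq:Loj-FR} are $+\infty$ and the bound is trivial; on $\mathcal{P}$ the estimate is exactly the classical comparison $\DKL(\pi|\mu) \le \int (\pi-\mu)^2/\mu\,\dd x$.

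\emph{Exponential decay.} The first variation of $F=\DKL(\pi|\cdot)$ is $\tfrac{\delta F}{\delta\mu}[\mu] = 1 - \tfrac{\dd\pi}{\dd\mu}$ (the force already appearing in~\eqref{eq:vanilla-wasserstein-rkl-gfe}), and by~\eqref{eq:gradient-structure-revKL-he-gf-reaction} the flow satisfies $\dot\mu_t = \pi - \mu_t = -\mu_t\bigl(1-\tfrac{\dd\pi}{\dd\mu_t}\bigr)$. The chain rule then gives the dissipation identity
\[
\frac{\dd}{\dd t}\DKL(\pi|\mu_t) = \int \Bigl(1-\tfrac{\dd\pi}{\dd\mu_t}\Bigr)\,\dd\dot\mu_t = -\int \Bigl(1-\tfrac{\dd\pi}{\dd\mu_t}\Bigr)^2 \dd\mu_t = -\Bigl\|1-\tfrac{\dd\pi}{\dd\mu_t}\Bigr\|^2_{L^2_{\mu_t}} \le -\DKL(\pi|\mu_t),
\]
the last step being~\eqref{eq:Loj-FR} with $c=1$. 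Therefore $\tfrac{\dd}{\dd t}\bigl(e^{t}\DKL(\pi|\mu_t)\bigr)\le 0$, i.e.\ $\DKL(\pi|\mu_t) \le e^{-t}\DKL(\pi|\mu_0)$. As a cross-check, the closed form $\mu_t = e^{-t}\mu_0 + (1-e^{-t})\pi$ together with convexity of $\DKL(\pi|\cdot)$ gives the same bound directly: $\DKL(\pi|\mu_t) \le e^{-t}\DKL(\pi|\mu_0) + (1-e^{-t})\DKL(\pi|\pi) = e^{-t}\DKL(\pi|\mu_0)$.

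\emph{Main obstacle.} There is no substantial obstacle; the real content is recognizing that the Fisher--Rao flow is the straight line~\eqref{eq:gradient-structure-revKL-he-gf-sol} and that the scalar estimate holds with the \emph{sharp} constant $c=1$ (optimality follows from $(1-t)^2 - c\,\varphi_{\mathrm{KL}}(t)\to 1-c$ as $t\to 0^{+}$). The only points needing the usual formal care are differentiating under the integral sign in the dissipation identity and making sense of $L^2_{\mu_t}$ and $\dd\pi/\dd\mu_t$ on $\Mplus$ when $\pi$ and $\mu_t$ are not mutually absolutely continuous — both handled within the paper's standing convention that proofs are formal.
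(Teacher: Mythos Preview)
Your argument is correct and, for the exponential decay step, essentially identical to the paper's: compute $\tfrac{\dd}{\dd t}\DKL(\pi|\mu_t)$ via the chain rule along~\eqref{eq:gradient-structure-revKL-he-gf-reaction}, invoke~\eqref{eq:Loj-FR}, and apply Gr\"onwall. Where you differ is in the first claim. The paper does not prove~\eqref{eq:Loj-FR} in place but appeals to a more general Polyak--\L{}ojasiewicz inequality from~\citep{zhu2024approximation} covering a family of $\varphi$-divergences. Your route is more elementary and fully self-contained: you reduce~\eqref{eq:Loj-FR} to the pointwise scalar bound $(1-t)^2 \ge t\log t - t + 1$, verify it via the factorization $t(t-1-\log t)\ge 0$, and thereby extract the explicit sharp constant $c=1$, which the paper's statement leaves unspecified. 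Your alternative derivation of the decay directly from the closed form~\eqref{eq:gradient-structure-revKL-he-gf-sol} together with convexity of $\mu\mapsto\DKL(\pi|\mu)$ is also not in the paper and gives a clean independent check. What the paper's route buys is context --- the same external lemma covers other entropies (Hellinger, reverse $\chi^2$) at once --- whereas your route buys transparency and the optimal constant without any external reference.
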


We emphasize that
Theorem~\ref{thm:exponential-decay-of-inclusive-KL-divergence}
is global and does not require the assumption of a uniform bound on the density ratio ${\dd \mu_0}/{\dd \pi}$ such as in
\citep{lu2023birth}.
This result
indicates a remarkable feature of the inclusive KL divergence: its Fisher-Rao gradient flow is capable of creating mass from zero-mass regions of $\pi$.
In machine learning and statistics, this is a highly desired feature as we often need to locate the support of the target measure $\pi$.
An intuition of the distinction between the exclusive KL 
and the inclusive KL (Theorem~\ref{thm:exponential-decay-of-inclusive-KL-divergence})
is indeed given by difference of their entropy generator slopes near the zero-mass region.

\paragraph{Connection with kernel methods}
It is also worth mentioning that,
Theorem~\ref{thm:exponential-decay-of-inclusive-KL-divergence}
does not require assumptions such as that the target $\pi$ is log-concave, as in the Wasserstein geometry.
In this sense, the Fisher-Rao gradient flow of the inclusive KL divergence is more generous than that in the Wasserstein geometry.

There is an interesting \emph{coincidence} of
the gradient systems \eqref{eq:gradient-structure-revKL-SHe-gf-prob}
and existing machine learning algorithms.
Let us consider a seemingly unrelated gradient system where both the energy functional $F$ and the dissipation geometry to be MMD, i.e.,
\begin{align}
    \begin{cases}
        \textrm{{Space} :}& 
        \mathcal P 
        \\
        \textrm{Energy functional} :& {F(\cdot):= \mmd^2(\cdot, \pi)}
        \\
        \textrm{Dissipation Geometry} :& { \text{MMD}}
    \end{cases}
    \label{eq:gradient-structure-revKL-MMD-gf-pure}
\end{align}
\begin{proposition}
    \label{prop:MMD-MMD-gf-equiv-to-revKL-HE-gf}
The MMD-MMD gradient flow equation,
generated by the gradient system \eqref{eq:gradient-structure-revKL-MMD-gf-pure},
coincides with the inclusive-KL-Fisher-Rao gradient flow equation~\eqref{eq:gradient-structure-revKL-he-gf-reaction}.
Consequently, MMD decays exponentially along the solution $\mu_t$ of~\eqref{eq:gradient-structure-revKL-he-gf-reaction},
\ie
${\mmd(\mu_t, \nu)\leq e^{- t}\cdot  \mmd(\mu_0, \nu)}$.
\end{proposition}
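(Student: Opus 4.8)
The plan is to exploit that the MMD is a \emph{flat} (Hilbertian) distance on measures, so that the gradient system~\eqref{eq:gradient-structure-revKL-MMD-gf-pure} is, after the kernel mean embedding, nothing but steepest descent of a quadratic energy in a Hilbert space, whose gradient flow is a linear contraction toward the minimizer. Write $\Phi(\mu):=\int k(\cdot,x)\,\dd\mu(x)\in\rkhs$ for the kernel mean embedding; since $k$ is positive definite, $\mmd^2(\mu,\nu)=\|\Phi(\mu)-\Phi(\nu)\|_\rkhs^2$, and the ISPD assumption makes $\Phi$ injective on signed measures. Hence the ``MMD dissipation geometry'' on $\calP$ is the pullback under $\Phi$ of the flat geometry of $\rkhs$: its Riemannian metric tensor is independent of the base point and is induced by the bilinear form $(v_1,v_2)\mapsto\int \int k(x,x')\,\dd v_1(x)\,\dd v_2(x')$, \ie $\mathbb G_{\mathrm{MMD}}=T_k$, where $T_k$ is the integral operator~\eqref{eq:def-int-op} acting on a measure: it sends a signed measure $\sigma$ to the function $x\mapsto\int k(x,x')\,\dd\sigma(x')$. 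It qualifies as a dissipation geometry in the sense of Definition~\ref{def:gradient-system} precisely because ISPD makes $T_k$ strictly positive, with Onsager operator $\mathbb K_{\mathrm{MMD}}=\mathbb G_{\mathrm{MMD}}^{-1}=T_k^{-1}$.

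Next I would compute the first variation. Perturbing $\mu\mapsto\mu+\epsilon v$ with $\int\dd v=0$ in $F(\mu)=\mmd^2(\mu,\pi)$ and collecting the cross term gives $\frac{\delta F}{\delta\mu}[\mu]=2\int k(\cdot,x')\,\dd(\mu-\pi)(x')=2\,T_k(\mu-\pi)$, up to an additive constant that is immaterial on $\calP$. Substituting into the general gradient-flow equation $\dot\mu=-\mathbb K_{\mathrm{MMD}}\,\frac{\delta F}{\delta\mu}[\mu]$ and using $T_k^{-1}T_k=\mathrm{Id}$ (valid since ISPD makes $T_k$ injective) yields $\dot\mu=-T_k^{-1}\big(2T_k(\mu-\pi)\big)=2(\pi-\mu)$, which is the reaction equation~\eqref{eq:gradient-structure-revKL-he-gf-reaction} up to a rescaling of time by the factor $2$ (one recovers it verbatim with the normalization $F=\tfrac12\mmd^2$, consistent with Proposition~\ref{prop:revKL-FR-gf-equiv}). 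Equivalently, through $\Phi$ the flow reads $\dot x=-2\big(x-\Phi(\pi)\big)$ in $\rkhs$ with $x=\Phi(\mu)$ --- the textbook gradient flow of a Hilbert-space quadratic --- and injectivity of $\Phi$ transports it back to $\mu$.

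For the exponential-decay consequence I would just invoke the closed form~\eqref{eq:gradient-structure-revKL-he-gf-sol}: it gives $\mu_t-\pi=e^{-t}(\mu_0-\pi)$, hence $\Phi(\mu_t)-\Phi(\pi)=e^{-t}\big(\Phi(\mu_0)-\Phi(\pi)\big)$ by linearity of $\Phi$, so $\mmd(\mu_t,\pi)=\|\Phi(\mu_t)-\Phi(\pi)\|_\rkhs=e^{-t}\,\mmd(\mu_0,\pi)$, which is the claimed contraction (indeed an equality) toward the target $\pi$. Alternatively one differentiates $\mmd^2(\mu_t,\pi)=\|\Phi(\mu_t)-\Phi(\pi)\|_\rkhs^2$ along~\eqref{eq:gradient-structure-revKL-he-gf-reaction} and applies Gr\"onwall.

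The main obstacle --- the only step beyond routine bookkeeping --- is making the MMD Onsager operator $T_k^{-1}$ rigorous: $T_k$ is compact, so $T_k^{-1}$ is unbounded and the flow equation is to be understood at the formal level adopted throughout this paper; one must also track the ``mod constants'' ambiguity of $\delta F/\delta\mu$ on $\calP$ and check that it does not affect $\dot\mu$ (it does not, since tangent vectors on $\calP$ have zero total mass, so $T_k$ acts on, and $T_k^{-1}$ returns, zero-mass measures) and verify mass conservation $\int\dd\dot\mu=\int\dd(\pi-\mu)=0$, so the flow indeed stays in $\calP$ --- consistent with the remark preceding Proposition~\ref{prop:revKL-FR-gf-equiv}. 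Everything else (the first variation, the operator cancellation, and the scalar Gr\"onwall estimate) is immediate given the standing ISPD hypothesis.
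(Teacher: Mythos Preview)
Your argument is correct. The paper does not actually spell out a proof of this proposition in the appendix; it is asserted, with the surrounding text pointing to the MMD--JKO construction of \citet{gladin2024interaction} and to the general formal machinery. Your derivation---identifying the MMD metric tensor with the integral operator $T_k$ (since MMD is the pullback of the flat $\rkhs$-norm under the mean embedding $\Phi$), computing $\frac{\delta F}{\delta\mu}=T_k(\mu-\pi)$ up to additive constants, and then cancelling $T_k^{-1}T_k$ in $\dot\mu=-\mathbb K_{\mathrm{MMD}}\frac{\delta F}{\delta\mu}$---is exactly the natural one and is in fact more explicit than anything the paper provides.

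Your observation about the factor $2$ is a legitimate catch: the energy in display~\eqref{eq:gradient-structure-revKL-MMD-gf-pure} is written as $\mmd^2$ without the $\tfrac12$, which would literally produce $\dot\mu=2(\pi-\mu)$; the paper's claimed flow $\dot\mu=\pi-\mu$ and decay rate $e^{-t}$ are consistent with the $\tfrac12\mmd^2$ normalization used elsewhere (e.g.\ in~\eqref{eq:gradient-structure-mmd-wgf} and~\eqref{eq:gradient-structure-revKL-IFT-pure}), so this is a harmless normalization slip. Your handling of the unbounded $T_k^{-1}$ as formal, of the ``mod constants'' ambiguity on $\calP$, and of mass conservation, all match the paper's declared level of rigor (``The mathematical proofs are formal'').
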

From a dynamical system perspective,
this result also shows that the MMD is a \emph{Lyapunov functional} for the inclusive-KL Fisher-Rao flow.
We make this point precise mathematically in the following result.
\begin{theorem}
    [MMD as Lyapunov functional for inclusive-KL Fisher-Rao flow]
    Suppose the kernel $K$ is bounded and satisfies the integrally strict positive-definite (ISPD) condition.
    The dissipation quantity, \ie the time derivatives of the inclusive KL and the MMD satisfies
    \begin{align}
        \frac{\dd }{\dd t}
        \left(
            \DKL(\pi | \mu_t)
        \right) 
        \leq 
        c\cdot \frac{\dd }{\dd t} \mmd^2(\mu_t, \pi)
        \text{ for some } c>0
        .
    \end{align}
    Furthermore, the equilibrium satisfies
        $\frac{\dd }{\dd t}
        \left(
            \DKL(\pi | \mu_t)
        \right) 
        =0
        \iff
         \frac{\dd }{\dd t} \mmd^2(\mu_t, \pi)=0$.
\label{thm:lyapunov-MMD}
\end{theorem}
    \begin{proof}
        This proof is by the energy dissipation equality.
    \end{proof}

So far, we have shown that the same flow equation~\eqref{eq:gradient-structure-revKL-he-gf-reaction}
has
two different gradient structures: MMD and Fisher-Rao.
Such instances are well-known in PDE literature.
To implement a practical algorithm
via simulating the gradient flow~\eqref{eq:gradient-structure-revKL-MMD-gf-pure},
\citet{gladin2024interaction} proposed to use the following
minimizing movement scheme:
\begin{align}
    \mu^{\ell+1}
    &
    \gets\argmin_{\mu\in\cal P}
    \frac12 \mmd^2(\mu, \pi ) + \frac1{2\eta}\mmd^2(\mu, \mu^l)
    .
    \tag{MMD-MMD}
    \label{eq:mmd-mmd-JKO}
\end{align}
Using Proposition~\ref{prop:MMD-MMD-gf-equiv-to-revKL-HE-gf},
we can obtain an
interesting insight that connects kernel methods and information geometry.
\begin{proposition}
    Suppose the kernel $K$ is bounded and ISPD.
    Then, $\mu^*$ is a solution of the variational problem \eqref{eq:mmd-mmd-JKO}
    if and only if
    it is a solution of
\begin{align}
    \argmin_{\mu\in\cal P} \DKL(\pi | \mu) + \frac1{\eta}\DKL(\mu^l | \mu)
    .
    \label{eq:revKL-FR-JKO-via-MMD}
\end{align}
    \label{prop:revKL-FR-JKO-via-MMD}
\end{proposition}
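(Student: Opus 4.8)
The plan is to compare the first-order optimality conditions of the two convex variational problems over $\mathcal P$ and show they single out the same minimizer $\mu^\ast=\tfrac{\eta\pi+\mu^l}{1+\eta}$, using the ISPD hypothesis to convert the (non-local) optimality condition for the MMD problem into a pointwise identity between measures.

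First I would compute the relevant first variations. Expanding $\mmd^2(\mu+\epsilon v,\nu)$ in $\epsilon$ gives $\tfrac{\delta}{\delta\mu}\bigl[\tfrac12\mmd^2(\mu,\nu)\bigr](y)=\int k(y,x)\,\dd(\mu-\nu)(x)$, so the first variation of the \eqref{eq:mmd-mmd-JKO} objective at $\mu$ is $\int k(\cdot,x)\,\dd(\mu-\pi)(x)+\tfrac1\eta\int k(\cdot,x)\,\dd(\mu-\mu^l)(x)$. On the other side, writing $\DKL(\pi|\mu)=\int\pi\log\pi-\int\pi\log\mu$ and differentiating $\int\pi\log(\mu+\epsilon v)$ at $\epsilon=0$ yields $\tfrac{\delta}{\delta\mu}\DKL(\pi|\mu)=-\tfrac{\pi}{\mu}$ and likewise $\tfrac{\delta}{\delta\mu}\DKL(\mu^l|\mu)=-\tfrac{\mu^l}{\mu}$, so the first variation of the \eqref{eq:revKL-FR-JKO-via-MMD} objective at $\mu$ is $-\tfrac1\mu\bigl(\pi+\tfrac1\eta\mu^l\bigr)$. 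Minimizing a functional over $\mathcal P$ means its first variation equals a constant (the Lagrange multiplier for the unit-mass constraint) on the support of the minimizer, with the corresponding inequality off-support.

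Next I would exploit ISPD. From the \eqref{eq:mmd-mmd-JKO} condition the signed measure $\sigma:=\bigl(1+\tfrac1\eta\bigr)\mu^\ast-\pi-\tfrac1\eta\mu^l$ satisfies $\int k(\cdot,x)\,\dd\sigma(x)\equiv\mathrm{const}$, and $\sigma$ has total mass zero since $\pi,\mu^l,\mu^\ast\in\mathcal P$; pairing the constant against $\sigma$ gives $\iint k\,\dd\sigma\,\dd\sigma=\mathrm{const}\cdot\int\dd\sigma=0$, so ISPD forces $\sigma=0$, i.e. $\mu^\ast=\tfrac{\eta\pi+\mu^l}{1+\eta}$. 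Since $\mu\mapsto\mmd^2(\mu,\pi)$ is strictly convex on $\mathcal P$ (differences of probability measures have zero mass, on which the quadratic form $\iint k\,\dd w\,\dd w$ is strictly positive by ISPD), this stationary point is the unique minimizer of \eqref{eq:mmd-mmd-JKO}. Substituting the same $\mu^\ast$ into the RKL first variation gives $-\tfrac1{\mu^\ast}\bigl(\pi+\tfrac1\eta\mu^l\bigr)=-(1+\tfrac1\eta)$, a constant; moreover $\mu^\ast\ge\tfrac\eta{1+\eta}\pi$, so $\pi\ll\mu^\ast$ and $\DKL(\pi|\mu^\ast)<\infty$. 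Because $\mu\mapsto\DKL(\pi|\mu)+\tfrac1\eta\DKL(\mu^l|\mu)$ is convex ($-\log$ is convex and we integrate against the positive measures $\pi$ and $\mu^l$), stationarity is sufficient, so $\mu^\ast$ also minimizes \eqref{eq:revKL-FR-JKO-via-MMD}, and conversely its Euler–Lagrange condition forces any minimizer to coincide with $\mu^\ast$. Hence the two problems have the same solution set, which is the claimed equivalence; as a by-product one recovers the implicit-Euler step $\mu^{l+1}=\tfrac{\eta\pi+\mu^l}{1+\eta}$ of \eqref{eq:gradient-structure-revKL-he-gf-reaction}, consistent with Proposition~\ref{prop:revKL-FR-gf-equiv}.

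The main obstacle is the bookkeeping around the unit-mass constraint and the support of the minimizer: one has to argue that the Euler–Lagrange \emph{equality} (not merely an inequality) holds wherever $\pi+\tfrac1\eta\mu^l>0$ — precisely where the RKL functional would otherwise be $+\infty$ — so that the minimizer is forced to have full support there, and that the ISPD deduction is applied to the whole signed measure $\sigma$ rather than a restriction. A minor additional point is checking admissibility of $\mu^\ast$ (nonnegativity, unit mass, and $\pi\ll\mu^\ast$), which all follow from positivity of the coefficients but should be stated explicitly.
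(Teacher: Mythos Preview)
Your argument is correct and shares the paper's basic strategy---compare first-order optimality conditions and use ISPD to pass between them---but the execution differs. The paper writes the Euler--Lagrange equation for \eqref{eq:revKL-FR-JKO-via-MMD} as $1-\tfrac{\dd\pi}{\dd\mu}+\tfrac1\eta\bigl(1-\tfrac{\dd\mu^l}{\dd\mu}\bigr)=0$ and then applies the integral operator $\mathcal T_{k,\mu}$ to both sides, invoking ISPD for injectivity, to land directly on the optimality condition of \eqref{eq:mmd-mmd-JKO}; it never names the minimizer. You instead solve each problem: from the MMD stationarity plus ISPD you extract the closed-form solution $\mu^\ast=\tfrac{\eta\pi+\mu^l}{1+\eta}$, then verify it satisfies the RKL first-order condition, and close the loop via convexity of $\mu\mapsto\DKL(\pi|\mu)$ and strict convexity of $\mu\mapsto\mmd^2(\mu,\pi)$. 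Your route is more constructive and handles the unit-mass Lagrange multiplier and uniqueness explicitly, and it delivers the implicit-Euler formula for \eqref{eq:gradient-structure-revKL-he-gf-reaction} as a by-product; the paper's route is shorter but leaves those points (and the role of the constant in the optimality condition) implicit in its ``formal'' derivation.
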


In addition to \eqref{eq:revKL-FR-JKO-via-MMD},
we can generalize the variational problem to general $\varphi$-divergence:
    $\displaystyle\argmin_{\mu\in\cal P} \DKL(\pi | \mu) + \frac1{\eta}\mathrm{D}_{\varphi}(\mu^l | \mu)$,
which includes the special case when $\mathrm{D}_{\varphi}$ is the squared Hellinger distance; see also Remark~\ref{remark:distinction-between-FR-and-He}.

\eqref{eq:mmd-mmd-JKO} 
can also be viewed as
a scaled instance of a MMD Barycenter problem, whose solution can be approximated using existing algorithms proposed by \citet{cohenEstimatingBarycentersMeasures2021,gladin2024interaction}.
From this paper's perspective,
Proposition~\ref{prop:revKL-FR-JKO-via-MMD} shows that their numerical algorithms 
actually also solves the variational problem~\eqref{eq:revKL-FR-JKO-via-MMD}.
Therefore, they can also be used to simulate a gradient flow that minimizes the inclusive-KL functional on the Fisher-Rao geometry.

\begin{remark}[Naming convention: Fisher-Rao vs. Hellinger]
    \label{remark:distinction-between-FR-and-He}
        Strictly speaking, we make the following distinction regarding the naming convention
        of the Fisher-Rao and Hellinger metrics.
        For more details, see the discussion in \cite{mielke2025hellinger,mielke2025some}.
        \begin{itemize}
            \item 
            Fisher-Rao (FR): a distance between parameters of the (exponential-family) distributions.
            \item
            Hellinger (He): a 
            $\varphi$-divergence/distance over positive measures.
            \item
            spherical Hellinger (SHe): a distance induced by restricting the Hellinger geodesics to the probability measures; also called Bhattacharya distance by \citet{rao1945information} after its first introduction
            by \citet{bhattacharyyaMeasureDivergenceTwo1946}.
             We can recover the equivalence between SHe and FR if we consider the
            trivial parameterization of the probability measure by itself (infinite-dimensional).
        \end{itemize}
\end{remark}

\section{Unbalanced transport: Wasserstein-Fisher-Rao gradient flows}
\label{sec:wfr-gfe}

The Wasserstein geometry endows us with the mechanism to transport mass.
On the other hand,
the Fisher-Rao geometry lets us create and destroy mass.
One major development in optimal transport theory is 
the combination of both via unbalanced transport, invented independently by~\citet{chizatInterpolatingDistanceOptimal2018,chizat_unbalanced_2019,liero_optimal_2018,kondratyevNewOptimalTransport2016}.
    The resulting metric between two non-negative measures is known as the Wasserstein-Fisher-Rao (WFR) distance, also known as the Hellinger-Kantorovich distance,
    defined via the entropic transport problem~\citep{liero_optimal_2018}
    \begin{align*}
        \WFR^2(\mu_1, \mu_2)
        =
          \min _{\Pi\in \Gamma(\mu_1, \mu_2)}
          \Biggl\{
          \alpha \int  c \dd {\Pi}
          +
          \beta { \DKL}(\pi_1|\mu_1)
          +
          \beta{ \DKL}(\pi_2|\mu_2)
          \Biggr\}
    \end{align*}
    where $\alpha, \beta>0$ are two scaling parameters.
    $\Gamma(\mu_1, \mu_2)$ is the set of all positive measures with marginals $\mu_1$ and $\mu_2$.
    $c$ is the transport cost in the standard Wasserstein distance and $\mathrm{D}_\varphi$ is the $\varphi$-divergence (defined in~\eqref{eq:general-divergence}).
In this paper, we define the WFR gradient structure via the Onsager operator:
the WFR Riemannian metric tensor is an inf-convolution of the Wasserstein tensor and the Fisher-Rao tensor
$\mathbb G_{\WFR}(\mu) = \mathbb G_{W}(\mu) \square \mathbb G_{\FR}(\mu)$
\citep{chizat_unbalanced_2019,liero_optimal_2018,gallouet2017jko}.
By the Legendre transform, its inverse, the Onsager operator, is given by the sum
$\mathbb K_{\WFR}(\mu) = \mathbb K_{W}(\mu) + \mathbb K_{\FR}(\mu)$.
For conciseness, we only focus on the case of
WFR distance restricted to the space of probability measures by default.
Therefore,
the WFR distance should technically be referred to as the spherical Hellinger-Kantorovich distance.
Let us now consider the following gradient structure in the WFR space
\begin{align}
    \begin{cases}
        \textrm{{Space} :}& 
        \mathcal P
        \\
        \textrm{Energy functional} :& {\text{inclusive KL: } \DKL(\pi | \cdot)}
        \\
        \textrm{Dissipation Geometry} :& { \text{(spherical) WFR } \bbK_{\WFR}}
    \end{cases}
    \label{eq:gradient-structure-revKL-WFR-pure}
\end{align}
Using \eqref{eq:wfr-gfe-revKL} and the results in the previous two sections,
the WFR gradient flow equation generated by \eqref{eq:gradient-structure-revKL-WFR-pure} is given by the reaction-diffusion-type PDE
\begin{align}
    \dot \mu =  \underbrace{\alpha \DIV \left(\mu\nabla 
    \left(
        1 - \frac{\dd \pi}{\dd \mu}
    \right)
    \right)}_{\text{Wasserstein: transport}} - \underbrace{\beta \mu \cdot
    \left(
        1 - \frac{\dd \pi}{\dd \mu}
    \right)}_{\text{Fisher-Rao: birth-death}}
    .
    \label{eq:wfr-gfe-revKL}
\end{align}
The derivation is standard; cf. the aforementioned references.
Exploiting the unique properties established in Theorem~\ref{thm:exponential-decay-of-inclusive-KL-divergence},
we can conclude the following.
\begin{corollary}
    The inclusive KL divergence functional decays
    exponentially towards zero along the solution of the PDE~\eqref{eq:wfr-gfe-revKL}.
    \label{cor:wfr-gfe-revKL-decay}
\end{corollary}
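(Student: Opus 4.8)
The plan is to compute the time derivative of $t \mapsto \DKL(\pi \mid \mu(t))$ along the WFR flow \eqref{eq:wfr-gfe-revKL} and show it is bounded above by $-\,\DKL(\pi\mid\mu(t))$ (up to a positive constant that can be absorbed by rescaling time), so that Gr\"onwall's inequality yields the exponential decay. Since \eqref{eq:wfr-gfe-revKL} is the gradient flow of $F(\mu)=\DKL(\pi\mid\mu)$ in the WFR geometry whose Onsager operator splits as $\bbK_{\WFR}=\alpha\,\bbK_W+\beta\,\bbK_{\FR}$, the dissipation identity reads
\begin{align}
    \frac{\dd}{\dd t}\DKL(\pi \mid \mu(t))
    = -\,\Bigl\langle \tfrac{\delta F}{\delta\mu}[\mu],\ \bbK_{\WFR}(\mu)\,\tfrac{\delta F}{\delta\mu}[\mu]\Bigr\rangle
    = -\,\alpha\,\bigl\langle \xi,\bbK_W(\mu)\xi\bigr\rangle - \beta\,\bigl\langle \xi,\bbK_{\FR}(\mu)\xi\bigr\rangle,
    \label{eq:wfr-dissipation-split}
\end{align}
where $\xi = \tfrac{\delta F}{\delta\mu}[\mu] = 1 - \tfrac{\dd\pi}{\dd\mu}$ is the inclusive-KL generalized force. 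Both quadratic forms on the right are nonnegative because $\bbK_W$ and $\bbK_{\FR}$ are positive-semidefinite Onsager operators, so in particular $\frac{\dd}{\dd t}\DKL(\pi\mid\mu(t))\le -\beta\,\langle\xi,\bbK_{\FR}(\mu)\xi\rangle$.

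Next I would identify the Fisher-Rao term with the left-hand side of the Polyak-\L{}ojasiewicz inequality \eqref{eq:Loj-FR}. Using $\bbK_{\FR}(\mu)\xi = \mu(\xi - \int\xi\,\dd\mu)$ and $\xi = 1-\tfrac{\dd\pi}{\dd\mu}$, one has $\int\xi\,\dd\mu = \int(1-\tfrac{\dd\pi}{\dd\mu})\,\dd\mu = \mu(\bbR^d)-\pi(\bbR^d)=0$ on the probability simplex, so $\langle\xi,\bbK_{\FR}(\mu)\xi\rangle = \int\xi^2\,\dd\mu = \bigl\|1-\tfrac{\dd\pi}{\dd\mu}\bigr\|^2_{L^2_\mu}$, which is exactly the quantity appearing in \eqref{eq:Loj-FR} and which drives the proof of Theorem~\ref{thm:exponential-decay-of-inclusive-KL-divergence}. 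Invoking that global \L{}ojasiewicz inequality gives $\langle\xi,\bbK_{\FR}(\mu)\xi\rangle \ge c\,\DKL(\pi\mid\mu)$, and combining with \eqref{eq:wfr-dissipation-split} and discarding the nonnegative Wasserstein term,
\begin{align}
    \frac{\dd}{\dd t}\DKL(\pi \mid \mu(t)) \ \le\ -\,\beta\,c\,\DKL(\pi \mid \mu(t)).
    \label{eq:wfr-gronwall-input}
\end{align}
Gr\"onwall's lemma then yields $\DKL(\pi\mid\mu(t)) \le e^{-\beta c\, t}\,\DKL(\pi\mid\mu_0)$, which is the claimed exponential decay (with rate $1$ under the normalization $\beta=1$, $c=1$ used in Theorem~\ref{thm:exponential-decay-of-inclusive-KL-divergence}, or more generally up to a time rescaling). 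The Wasserstein part only helps: it contributes an extra nonnegative dissipation term, so the WFR decay is at least as fast as the pure Fisher-Rao decay already established.

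The main obstacle is not the algebra but the justification of the chain rule \eqref{eq:wfr-dissipation-split} along solutions of the reaction-diffusion PDE — i.e., that $t\mapsto\DKL(\pi\mid\mu(t))$ is absolutely continuous and that its derivative is given by the dissipation functional — together with ensuring the flow stays in $\mathcal P$ (mass conservation under the spherical/probability-constrained WFR structure) so that $\int\xi\,\dd\mu=0$ continues to hold. In keeping with the paper's convention that "the mathematical proofs are formal" and "technicalities of PDEs such as solution uniqueness, existence, and regularity are beyond the scope," I would carry out \eqref{eq:wfr-dissipation-split}--\eqref{eq:wfr-gronwall-input} at the formal level, noting that the rigorous version would require a well-posedness theory for \eqref{eq:wfr-gfe-revKL} analogous to the reaction-diffusion analysis in \citep{lu2023birth}, but — as emphasized after Theorem~\ref{thm:exponential-decay-of-inclusive-KL-divergence} — \emph{without} their uniform density-ratio bound, since the \L{}ojasiewicz inequality \eqref{eq:Loj-FR} is global.
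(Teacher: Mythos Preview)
Your proposal is correct and matches the paper's own proof essentially line for line: compute the dissipation via the additive split $\bbK_{\WFR}=\alpha\,\bbK_W+\beta\,\bbK_{\FR}$, drop the nonnegative Wasserstein term, identify the Fisher-Rao term with $\bigl\|1-\tfrac{\dd\pi}{\dd\mu}\bigr\|^2_{L^2_\mu}$, invoke \eqref{eq:Loj-FR}, and conclude by Gr\"onwall. You supply more detail than the paper (the mass-conservation argument for $\int\xi\,\dd\mu=0$ and the caveat about the formal chain rule), but the route is the same.
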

While this result renders the WFR gradient flow equation an attractive candidate for algorithm design,
we again cannot simulate~\eqref{eq:wfr-gfe-revKL} due to the function $1 - \dd\pi/\dd\mu$.
To address this,
we now follow \eqref{eq:kernelized-gfe-reverseKL} to kernelize the generalized force in the transport velocity
\begin{align}
    \dot \mu =  
    \alpha\cdot \operatorname{div}\left(\mu \int \nabla_2 K(x,\cdot )\dd \left(\mu-\pi\right)(x)\right)
    - \beta \cdot
    \left(
        \mu -{ \pi}
    \right)
    .
    \tag{IFT-GF}
    \label{eq:wfr-gfe-revKL-kernelized}
\end{align}
Due to
Proposition~\ref{prop:MMD-MMD-gf-equiv-to-revKL-HE-gf},
we immediately find that:
\begin{corollary}
    \eqref{eq:wfr-gfe-revKL-kernelized}
    is the gradient flow equation of the squared MMD functional,
    \ie with the gradient structure
    \begin{align}
        \begin{cases}
            \textrm{{Space} :}& 
            \mathcal P 
            \\
            \textrm{Energy functional} :& {\frac12 \mmd^2(\cdot, \pi)}
            \\
            \textrm{Dissipation Geometry} :& { \text{Interaction-force transport (IFT)~\citep{gladin2024interaction}
            }}
        \end{cases}
        \label{eq:gradient-structure-revKL-IFT-pure}
    \end{align}
    \label{cor:revKL-MMD-wfr-equiv-g-structure}
\end{corollary}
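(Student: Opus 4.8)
The plan is to derive the statement directly from the additivity of the IFT (spherical WFR) Onsager operator together with the two building-block computations already in hand. Recall that a gradient system $(\mathcal P, F, \bbK)$ produces the flow $\dot\mu = -\bbK(\mu)\tfrac{\delta F}{\delta\mu}[\mu]$, which is \emph{linear} in $\bbK$. As recorded above, the WFR metric tensor is an inf-convolution of the Wasserstein and Fisher--Rao tensors, so by Legendre transform its Onsager operator is the (weighted) sum; the \ours{} geometry of \citet{gladin2024interaction} is the kernelized counterpart, $\bbK_{\ours}(\mu) = \alpha\,\bbK_W(\mu) + \beta\,\bbK_{\mathrm{MMD}}(\mu)$, where $\bbK_{\mathrm{MMD}}$ is the Onsager operator of the MMD dissipation geometry appearing in \eqref{eq:gradient-structure-revKL-MMD-gf-pure}. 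Consequently the \ours{}-gradient flow of \emph{any} functional $F$ is the sum of $\alpha$ times its Wasserstein gradient flow and $\beta$ times its MMD gradient flow; it therefore suffices to identify these two flows for the specific energy $F(\mu)=\tfrac12\mmd^2(\mu,\pi)$ and check their sum equals the right-hand side of \eqref{eq:wfr-gfe-revKL-kernelized}.

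First I would fix $F(\mu)=\tfrac12\mmd^2(\mu,\pi)$ and note its first variation, the kernel witness function $\tfrac{\delta F}{\delta\mu}[\mu](\cdot)=\int k(\cdot,x)\,\dd(\mu-\pi)(x)$. For the transport (Wasserstein) contribution, Theorem~\ref{thm:equivalence-of-gradient-flow-equations} already identifies the Wasserstein gradient flow of this $F$ as \eqref{eq:wgf-mmd-pde}, i.e.\ $\DIV\!\bigl(\mu\int\nabla_2 k(x,\cdot)\,\dd(\mu-\pi)(x)\bigr)$ --- precisely the first term of \eqref{eq:wfr-gfe-revKL-kernelized} after scaling by $\alpha$. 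For the reaction (MMD/Fisher--Rao) contribution, Proposition~\ref{prop:MMD-MMD-gf-equiv-to-revKL-HE-gf} shows that the MMD-dissipation gradient flow of the \emph{same} energy $\tfrac12\mmd^2(\cdot,\pi)$ coincides with the inclusive-KL--Fisher--Rao flow \eqref{eq:gradient-structure-revKL-he-gf-reaction}, namely $\dot\mu=\pi-\mu$; this is exactly $-(\mu-\pi)$, the second term of \eqref{eq:wfr-gfe-revKL-kernelized} after scaling by $\beta$. (It is worth observing along the way that no kernelization is actually required for the reaction term, since $\mu\cdot(1-\dd\pi/\dd\mu)=\mu-\pi$ pointwise whenever $\pi\ll\mu$, so the birth--death part of \eqref{eq:wfr-gfe-revKL} already has the clean form appearing in \eqref{eq:wfr-gfe-revKL-kernelized}.) Adding the two contributions with weights $\alpha$ and $\beta$ reproduces \eqref{eq:wfr-gfe-revKL-kernelized} verbatim, which is the assertion of the corollary together with the gradient structure \eqref{eq:gradient-structure-revKL-IFT-pure}.

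The only delicate point --- and the step I expect to need the most care --- is the bookkeeping around $\bbK_{\mathrm{MMD}}$: the MMD metric tensor is built from the integral operator $\Tkmu$ (equivalently its kernel), while the first variation $\tfrac{\delta F}{\delta\mu}[\mu]$ itself already carries an integral operator, and one must verify that the operator introduced by the dissipation geometry cancels the one carried by the energy, leaving the bare term $\pi-\mu$. This cancellation is exactly the content of Proposition~\ref{prop:MMD-MMD-gf-equiv-to-revKL-HE-gf}, whose validity rests on the standing assumption that $k$ is bounded and ISPD (guaranteeing the relevant operator is invertible), so under that hypothesis it is legitimate to invoke it as a black box rather than re-deriving it. Everything else is routine: linearity of the gradient flow in the Onsager operator, the additive decomposition of $\bbK_{\ours}$, and the elementary identity for the reaction term. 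I would also note that Corollary~\ref{cor:wfr-gfe-revKL-decay} is not needed here, but its exponential-decay conclusion transfers to \eqref{eq:wfr-gfe-revKL-kernelized} unchanged, since the reaction term is untouched by kernelization and the transport term only dissipates the MMD energy further.
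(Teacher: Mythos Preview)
Your proposal is correct and follows essentially the same approach as the paper, which simply invokes Proposition~\ref{prop:MMD-MMD-gf-equiv-to-revKL-HE-gf} for the reaction term and the additive (inf-convolution) structure of the IFT Onsager operator to assemble the two pieces. One minor remark: your appeal to Theorem~\ref{thm:equivalence-of-gradient-flow-equations} for the transport term is strictly speaking unnecessary, since the identification of \eqref{eq:wgf-mmd-pde} as the Wasserstein gradient flow of $\tfrac12\mmd^2(\cdot,\pi)$ is already recorded as the gradient structure~\eqref{eq:gradient-structure-mmd-wgf} and does not rely on that theorem.
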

This has recently been studied
under the name of
interaction-force transport (IFT) gradient flow by \citet{gladin2024interaction}.
It has been shown to practically accelerate and improve the performance of the MMD minimization task
with proven guarantees.
Here, we have shown that \emph{the IFT gradient flow in \citet{gladin2024interaction} is
an approximation to the Wasserstein-Fisher-Rao gradient flow of the inclusive-KL functional}.
\citet{gladin2024interaction} have shown that 
MMD decays exponentially along the
    solution $\mu_t$ to the PDE~\eqref{eq:wfr-gfe-revKL-kernelized}, \ie
            $\mmd(\mu_t, \nu)
        \leq 
        e^{- \beta t}\cdot  \mmd(\mu_0, \nu)$
        .
An important aspect is that
this convergence does not rely on the so-called log-concavity of the target distribution $\pi$; cf. \citep{chewiStatisticalOptimalTransport2024}.

\section{Gaussian gradient flows}
\label{sec:gaussian-manifold}
The discussion in this paper focuses on the space of general probability measures.
In variational inference \cite{jordanIntroductionVariationalMethods1999,
wainwrightGraphicalModelsExponential2008, blei_variational_2017},
practitioners are often interested in the Gaussian subspace $\calN^d$, which is a subspace of the general probability measure space.
An extremely effective algorithm, termed natural gradient descent \citep{amari1998natural,amariMethodsInformationGeometry2000,khanBayesianLearningRule2023,shen2024variational},
is the stable of the GVI framework.
The continuous-time dynamics of GVI optimization can be described a system of ODEs that evolve the parameter values $(m_t, \Sigma_t)$ that parameterize a Gaussian measure, sometimes known as the natural gradient flow.
We extend our analysis to the Gaussian space to provide a principled mathematical analysis foundation for Gaussian VI with the inclusive KL divergence, which was widely studied in machine learning literature, e.g. \citep{naessethMarkovianScoreClimbing2020,jerfelVariationalRefinementImportance2021,mcnamaraSequentialMonteCarlo2024,zhangTransportScoreClimbing2022}.

In this section, we consider the gradient structure:
\begin{align}
    \begin{cases}
        \textrm{{Space} :}& 
        d\textrm{-dimensional Gaussian measures } \mathcal{N}^d(\bbR^d)
        \\
        \textrm{Energy functional} :& {\text{inclusive KL: } \DKL(\pi | N(m, \Sigma))}
        \\
        \textrm{Dissipation Geometry} :& { \text{Fisher--Rao} }
    \end{cases}
\end{align}
The ODE flow equations of the Fisher-Rao Gaussian gradient flow with the \emph{exclusive} KL driving energy were analyzed in detail in \citep{chenGradientFlowsSampling2023,lieroEvolutionGaussiansHellingerKantorovichBoltzmann2025}.
We now study the Fisher-Rao Gaussian gradient flow with the \emph{inclusive} KL driving energy.
First, using the gradient structure provided by \citet{lieroEvolutionGaussiansHellingerKantorovichBoltzmann2025}, we can straightforwardly derive Fisher-Rao and
Bures-Wasserstein Gaussian \citep{lambertVariationalInferenceWasserstein2022,takatsu2011wasserstein} gradient flow are:
Suppose that the target distribution $\pi$ is not necessarily Gaussian, we denote its first and second moments as:
\begin{align}
    m_\pi := \mathbb{E}_\pi[x],
    \quad
    \Sigma_\pi := \mathbb{E}_\pi[(x - m_\pi)(x - m_\pi)^T].
    \label{eq:gvi-gfe-notation}
\end{align}
Using this notation, the inclusive KL divergence between a Gaussian measure $N(m_t, \Sigma_t)$ and the target distribution $\pi$ is given by:
\begin{align}\label{eq:KL-closed-form-Gaussian}
    \DKL(\pi | N(m_t, \Sigma_t))
    = \frac12 \operatorname{tr} \left( \Sigma_t^{-1} \Sigma_\pi \right) 
    + \frac12 \left( m_\pi - m_t \right)^T \Sigma_t^{-1} \left( m_\pi - m_t \right)
    + \frac12 \log \det \Sigma_t
    + \const.
\end{align}

A direct application of the gradient flow structure derived in \citet{lieroEvolutionGaussiansHellingerKantorovichBoltzmann2025} gives the following proposition:

\begin{proposition}
    \label{prop:gvi-gfe}
    The Fisher-Rao Gaussian gradient flow equation of the inclusive KL divergence is given by the following system of ODEs:
    \begin{align}
        \begin{split}
        \dot m_t &
        =
        -  (m_t - m_\pi),
        \\
        \dot \Sigma_t &
        =
        -\left(   \Sigma_t - \Sigma_\pi - (m_\pi - m_t)(m_\pi - m_t)^T \right)
        .
        \end{split}
        \label{eq:gvi-gfe}
    \end{align}
    The Bures-Wasserstein Gaussian gradient flow equations of the inclusive KL divergence are given by the following system of ODEs:
\begin{align}
    \begin{split}
        \dot m_t &= - \Sigma_t^{-1}(m_t - m _\pi ) 
        ,
        \\
        \dot \Sigma_t &= \Sigma_t^{-1}\left(  
            \Sigma_\pi - \Sigma_t
            + 
            \left(m_\pi - m_t\right)\left(m_\pi - m_t\right)^T
        \right)
        +
        \left(  
            \Sigma_\pi - \Sigma_t
            + 
            \left(m_\pi - m_t\right)\left(m_\pi - m_t\right)^T
        \right)
        \Sigma_t^{-1}
        .
    \end{split}
    \label{eq:gvi-bures-wasserstein-gfe-2}
\end{align}
Consequently, the Wasserstein-Fisher-Rao Gaussian gradient flow equations is obtained by combining the Fisher-Rao and Bures-Wasserstein gradient flow equations. We omit the formula to avoid redundancy; see \citep{lieroEvolutionGaussiansHellingerKantorovichBoltzmann2025} for details.
The covariance equation in \eqref{eq:gvi-bures-wasserstein-gfe-2} is, just like in the reverse KL setting \citep{lambertVariationalInferenceWasserstein2022},
a Lyapunov equation of a linear dynamical system.
Compared with the systems of ODEs in the Bures-Wasserstein~\cite{lambertVariationalInferenceWasserstein2022}, Fisher-Rao~\cite{chenGradientFlowsSampling2023}, and Wasserstein-Fisher-Rao~\cite{lieroEvolutionGaussiansHellingerKantorovichBoltzmann2025}, the ODE systems above involve the target moments $m_\pi, \Sigma_\pi$ directly.

\end{proposition}
The derivation of \eqref{eq:gvi-gfe} is given in Appendix.
The intuition of the above formula is clear: The ODEs drive the Gaussian state $N(m_t, \Sigma_t)$ towards the moment-matching target.

To illustrate the evolution of the Fisher-Rao Gaussian ODE using its explicit analytical solution derived in Proposition~\ref{cor:fr-ode-solution}, we consider three different target distributions $\pi$. The single-Gaussian case is already illustrated in the introduction (Figure~\ref{fig:fr-ode-intro}); here we focus on the two more challenging targets: a Gaussian Mixture Model (GMM) with 3 components (Figure~\ref{fig:fisher-rao-explicit-gmm}, left) and a joker/double banana target (Figure~\ref{fig:fisher-rao-explicit-banana}, right).

\begin{figure}[ht]
    \centering
    \begin{minipage}[t]{0.48\columnwidth}
        \vspace{0pt}
        \centering
        \includegraphics[width=\linewidth, trim={0 0 250 0}, clip]{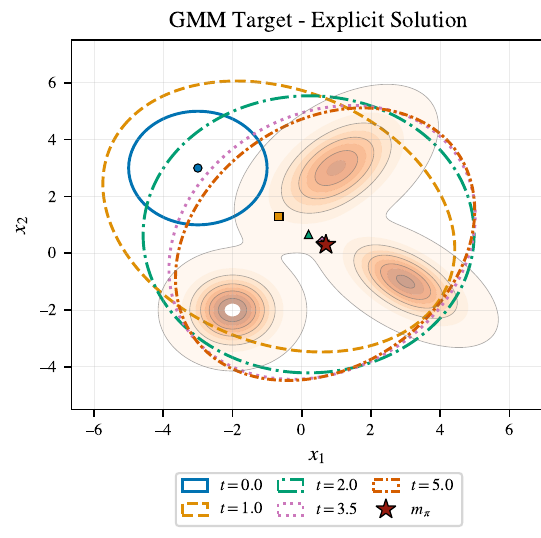}\\
        {\footnotesize(a) GMM target}
    \end{minipage}%
    \hfill%
    \begin{minipage}[t]{0.48\columnwidth}
        \vspace{0pt}
        \centering
        \includegraphics[width=\linewidth, trim={0 0 250 0}, clip]{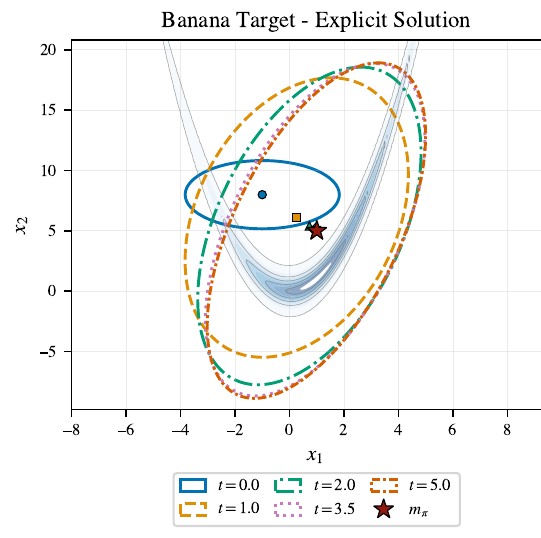}\\
        {\footnotesize(b) banana target}
    \end{minipage}
    \caption{Explicit Fisher--Rao ODE solution evolution for two targets: \emph{(a)} a $3$-component Gaussian mixture and \emph{(b)} a joker/double-banana target. In each panel the target density $\pi$ is shown as a background contour with the star marking $m_\pi$, and the flow Gaussian contours are drawn for $t=0,1.0,2.0,3.5,5.0$. In both cases the inclusive KL functional (ignoring the additive constant), computed in closed form via~\eqref{eq:KL-closed-form-Gaussian}, decays rapidly and monotonically, essentially leveling off by $t\approx 2$; the decay curves are omitted to save space.}
    \label{fig:fisher-rao-explicit-gmm}%
    \label{fig:fisher-rao-explicit-banana}
\end{figure}

Previous works on Gaussian gradient flows of the exclusive (forward) KL divergence rely on the strong log-concavity of the target distribution $\pi$ to establish the analysis results; see \citep{lambertVariationalInferenceWasserstein2022,chenGradientFlowsSampling2023,lieroEvolutionGaussiansHellingerKantorovichBoltzmann2025}.
To showcase a different perspective under the inclusive KL functional, we now establish the global convergence of the Fisher-Rao-Gaussian gradient flow of the inclusive KL divergence.
\begin{proposition}\label{cor:fr-ode-solution}
    The Fisher-Rao-Gaussian ODE \eqref{eq:gvi-gfe} has unique
    exponentially converging
    solution
    \begin{align}
        m_t & = (1 - e^{-t}) m_\pi + e^{-t} m_0
        ,
        \label{eq:fr-ode-solution-mean}\\
        \Sigma_t
        & =
        e^{-t} \Sigma_0 +
        (1 - e^{-t}) \Sigma_\pi + 
        e^{-t}(1 - e^{-t}) (m_0 - m_\pi)(m_0 - m_\pi)^T
        \label{eq:fr-ode-solution-cov}
        .
    \end{align}
\end{proposition}
We emphasize that it does not require any strong log-concavity assumption as in the Bures-Wasserstein flow of the exclusive KL in \citep{lambertVariationalInferenceWasserstein2022}.
We obtain the insights from the ODE solution:
The mean solution~\eqref{eq:fr-ode-solution-mean} is a straightforward linear interpolation between the initial target means.
The covariance solution~\eqref{eq:fr-ode-solution-cov}, however, is a ``bridge'' between the initial and target covariance, with a correction term $e^{-t}(1 - e^{-t}) (m_0 - m_\pi)(m_0 - m_\pi)^T$ that vanishes at both ends $t=0$ and $t=\infty$.

Thus, this paper provides an additional perspective based on the analysis of gradient flows: the natural gradient flow (Fisher-Rao-Gaussian) of the inclusive KL divergence
admits the flow equation~\eqref{eq:gvi-gfe} and is globally exponentially convergent.
When $\pi$ is non-Gaussian, the flow converges to the moment-matching Gaussian $\hat\pi:=N(m_\pi,\Sigma_\pi)$, the Gaussian minimizer of $\DKL(\pi|\cdot)$.
This further demonstrates a rather different (e.g. no log-concave assumptions) and clean perspective of the gradient flow convergence from the typical exclusive KL settings in the literature.
Due to space constraints, we leave the practical implementation aspect of variational inference based on our theory to future work.

\section{Local-estimator gradient flows}
\label{sec:local_estimate}
The notion of ``kernelizing'' gradient flow is often summoned to describe using integral operator to approximate Wasserstein gradient flow velocity.
In contrast, using our \emph{force-kernelization} principle~\eqref{eq:force-kernelization-of-wasserstein-gradient-flow},
we now present a
but different construction using
the following local regression estimator
(a.k.a. Nadaraya-Watson)
of the variational derivative:
\begin{align*}
     \hat \theta_0(x)=
    \argmin_{\theta \in \bbR}
    \biggl\{
            \int \mu(x') K(x'-x) 
            \biggl|\theta  -   \dFdmut(x')\biggr|^2
            \dd x'
        \biggr\}
        ,
\end{align*}
where the energy functional the inclusive KL divergence $F=\DKL(\pi|\cdot)$ as in~\eqref{eq:vanilla-wasserstein-rkl-gfe}.
We again emphasize that we do not fit $\nabla \dFdmut$, but only the force $\dFdmut$ itself.
Using standard nonparametric regression results~\citep{tsybakov_introduction_2009,spokoiny2016nonparametric,zhu2024kernel},
we obtain the resulting flow equation
\begin{align*}
    \dot \mu =
    \DIV\left(
        \mu
    \cdot 
    \nabla 
    \int \frac{K(x'-x)}{\int \mu(x'') K(x''-x) \dd x''} \dd  \biggl( \mu(x') - \pi(x') \biggr) 
     \right)
    .
\end{align*}
However, in the paper, we will not use this flow equation:
Instead of taking gradient of kernels as in \cite{arbel_maximum_2019,korbaKernelSteinDiscrepancy2021},
we now propose
a direct estimator of the Wasserstein velocity
by incorporating a classical property of nonparametric regression
into our \emph{force-kernelization} principle~\eqref{eq:force-kernelization-of-wasserstein-gradient-flow}.
\begin{definition}
    The local-estimator Wasserstein gradient flow (\WGFloc) equation
    of the functional $F$
    is given by
    \begin{align}
        \label{eq:local-estimator-wgf}
        \dot \mu =
        \DIV\left(
            \mu \cdot \hat\theta_1(x)
        \right)
    \end{align}
    where $\hat\theta_1(x)$ is the solution to the nonparametric regression
    \begin{align}
        \bigl(\hat\theta_0,\hat\theta_1\bigr)(x)
        =
        \argmin_{\theta_0\in\bbR,\,\theta_1\in\bbR^d}
        \sum_{i=1}^N
        \left(
            \dFdmut(x_i) - \theta_0 - \theta_1^\top(x_i - x)
        \right)^2
        \,K_h(x_i - x)
        ,
        \label{eq:local-lin-wls-velocity}
    \end{align}
\end{definition}

Remarkably, the local-estimator Wasserstein gradient flow (\WGFloc) admits a nice sample-based estimator
without needing, in particular, the density ratio $\pi/\mu$ or gradient of the kernel:

Let $\{x_i\}_{i=1}^N\sim\mu_t$ be the particles and $\{z_j\}_{j=1}^M\sim\pi$ be
samples from the target, and recall the localized weights notation from Spokoiny~\citep{spokoiny2016nonparametric},
$\omega_i(x)=K_h(x_i-x)\big/\sum_{l=1}^N K_h(x_l-x)$.
For notational convenience, we define the following four sample-based moments:
\begin{gather}
    \begin{aligned}
    F_{0,1}(x)
    &=
    \sum_{i=1}^N \omega_i(x)\,(x_i-x)
    ,
    &
    F_{1,1}(x)
    &=
    \sum_{i=1}^N \omega_i(x)\,(x_i-x)(x_i-x)^\top
    ,
    \end{aligned}\label{eq:spokoiny-design-moments}
    \\
    \begin{aligned}
    S_0(x)
    &=
    1-\frac{N}{M}\sum_{j=1}^M \frac{K_h(z_j-x)}{\sum_{l=1}^N K_h(x_l-x)}
    ,
    \\
    S_1(x)
    &=
    \sum_{i=1}^N \omega_i(x)\,(x_i-x)
    -\frac{N}{M}\sum_{j=1}^M
    \frac{K_h(z_j-x)}{\sum_{l=1}^N K_h(x_l-x)}\,(z_j-x)
    .
    \end{aligned}\label{eq:spokoiny-response-moments}
\end{gather}
\begin{proposition}[Local-estimator WGF velocity in normalized-moment form]
\label{prop:local-estimator-wgf-spokoiny}
    The \WGFloc{} equation of the inclusive KL
    divergence $F=\DKL(\pi|\cdot)$ is
    \begin{align}
        \dot\mu
        =
        \DIV\!\left(\mu\cdot\hat\theta_1(x)\right)
        \text{ for  }
        \hat\theta_1(x)
        =
        \bigl(F_{1,1}(x)-F_{0,1}(x)\,F_{0,1}(x)^\top\bigr)^{-1}
        \bigl(S_1(x)-F_{0,1}(x)\,S_0(x)\bigr),
        \label{eq:local-estimator-wgf-spokoiny}
    \end{align}
    where $\hat\theta_1(x)$ is the local-linear slope of the
    nonparametric estimator~\eqref{eq:local-lin-wls-velocity}
    and admits a sample-based estimator as given in \eqref{eq:spokoiny-design-moments} and \eqref{eq:spokoiny-response-moments}.
\end{proposition}
Here $S_0(x)$ is precisely the Nadaraya--Watson force (a difference of two
    kernel density estimators), and the first sum in $S_1(x)$ equals $F_{0,1}(x)$.

Beyond the local linear estimator, we can also lift \eqref{eq:local-lin-wls-velocity} to a higher-order local \emph{polynomial} estimator
\begin{align}
    \hat\theta(x)
    =
    \argmin_{\theta\in\bbR^{P}}
    \sum_{i=1}^N
    \bigl(
        Y_i - \boldsymbol{\theta}^\top\phi_p(x_i - x)
    \bigr)^2
    \,K_h(x_i - x)
    ,
    \label{eq:local-poly-wls}
\end{align}
where $\phi_p(u)\in\bbR^{P}$ collects the monomials up to degree $p$, and $\boldsymbol{\theta}\in\bbR^{P}$ is the coefficient vector.
Another straightforward alternative is to replace the polynomial with a neural-network velocity model; we leave a this direction to a future detailed study.

\paragraph{Numerical simulations}
We now
showcase that
the combination of the proposed force-kernelization and local regression
is a powerful tool for approximating the inclusive KL -- Wasserstein gradient flow.
In Figure~\ref{fig:local-mmd-comparison}, we compare the local-estimator gradient flow against the vanilla ``MMD-flow'' of \citet{arbel_maximum_2019}.
As can be visually observed from the top block of Figure~\ref{fig:local-mmd-comparison}, the \WGFloc converges markedly faster, with final $\mmd^2$ values sometimes more than an order of magnitude lower than the baseline; in particular, on the banana target it keeps the particles on the curved arms, whereas the MMD-WGF collapses them onto a flat horizontal blob.
We also observe more stable behaviors for our proposed method, such as fewer straying particles
for the same hyperparameter settings.
For further numerical experiments, please refer to Appendix.
\begin{figure}[p]
    \centering
    \includegraphics[width=0.72\textwidth]{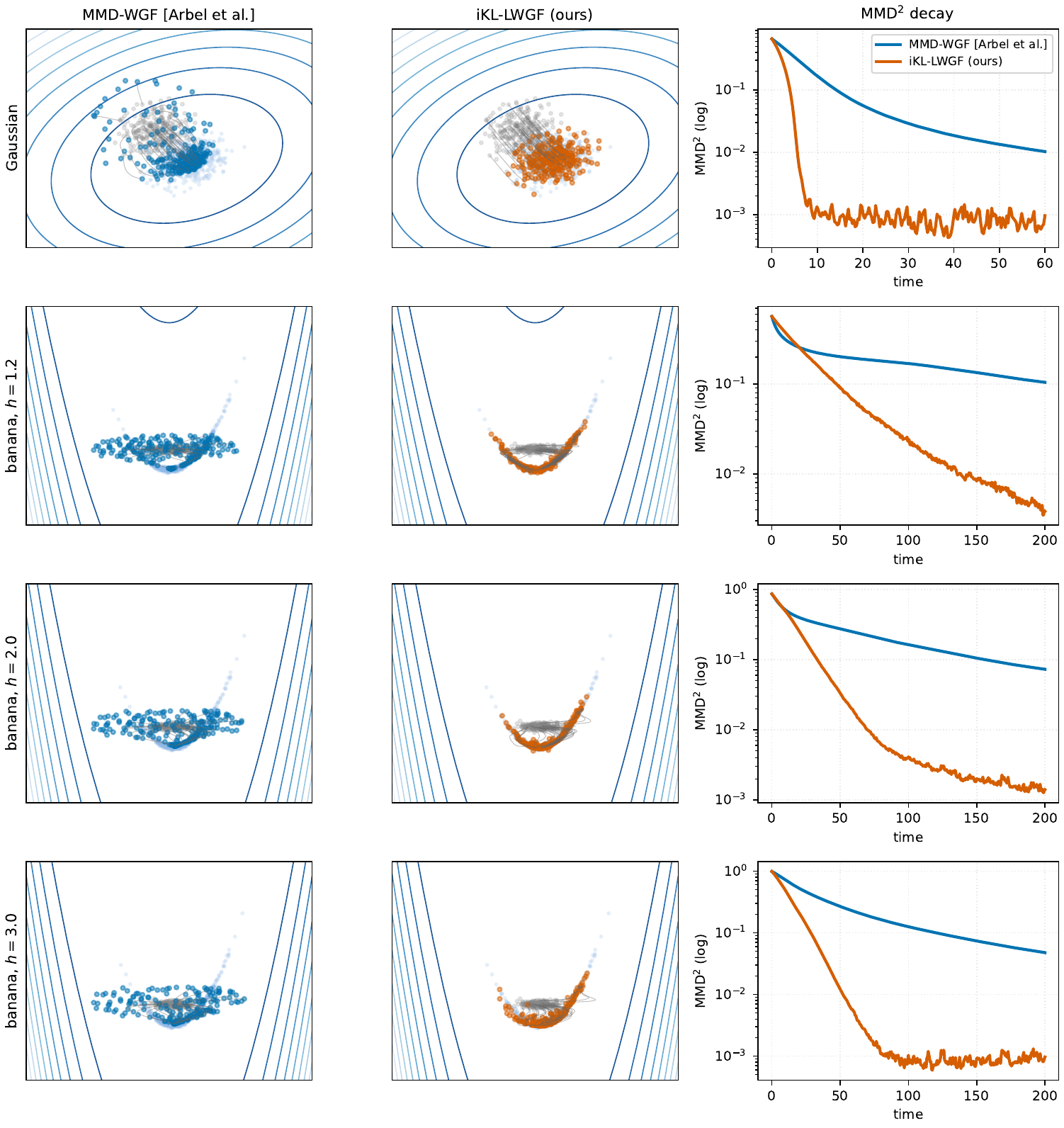}\\[6pt]
    \includegraphics[width=\textwidth]{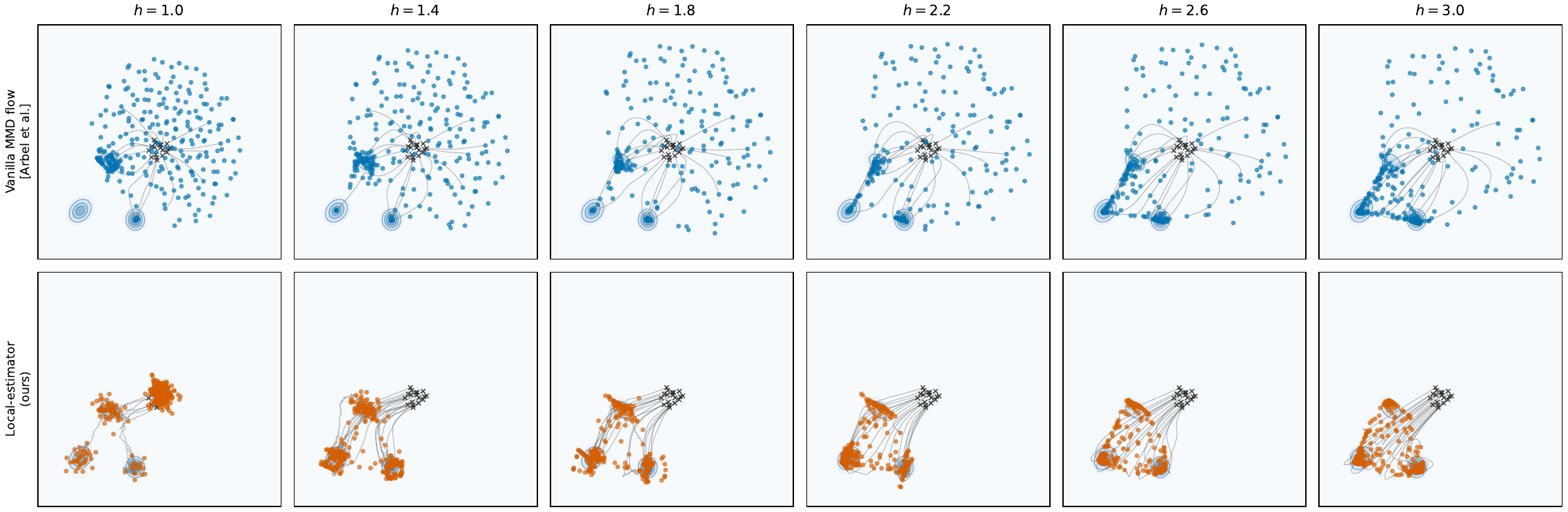}
    \caption{
    \emph{Top block:} a Gaussian and a banana target at three kernel bandwidths $h\in\{1.2,2.0,3.0\}$ (rows); the left two panels of each row show the initial (grey) and final (colored) particles (MMD-WGF and $\mathrm{iKL}\text{-}\WGFloc$, respectively) and the right panel the $\mmd^2$ decay.
    \emph{Bottom block:} a Gaussian mixture across six bandwidths uniformly spaced in $[1.0,3.0]$ (columns), MMD-WGF (top row) and $\mathrm{iKL}\text{-}\WGFloc$ (bottom row), initial location marked by ($\times$).
    The figures are best viewed in the electronic version.
    }
    \label{fig:local-mmd-comparison}\label{fig:local-mmd-gallery-gmm}
\end{figure}
The bandwidth behavior of the \WGFloc{} is illustrated in the bottom block of Figure~\ref{fig:local-mmd-comparison}: for Gaussian-kernel bandwidths uniformly spaced in $[1.0,3.0]$, and from an initialization placed far to the north-east of the three widely-separated modes, the \WGFloc{} (bottom row) concentrates the cloud near the three modes for the moderate-to-large bandwidths in the display, while the smallest bandwidth under-transports the particles under the same fixed step budget.
In contrast, the MMD-WGF~\cite{arbel_maximum_2019} (top row) scatters particles broadly across low-density regions for the same settings.

\section{Further related works and discussions}
Related approximation methods appear in the interacting particle systems literature, such as the blob method~\citep{carrillo2019blob,craig_blob_2023}, and in particle-based gradient descent methods~\citep{daiProvableBayesianInference2016,chizatMeanFieldLangevinDynamics2022}, albeit not concerned with the inclusive KL divergence.
\citet{trillosBayesianUpdateVariational2018} provide a variational perspective for Bayesian update via gradient flows of the KL, $\chi^2$, and Dirichlet energy functionals.
\citet{mauraisSamplingUnitTime2024} seek a velocity field to match the Fisher-Rao flow in the importance sampling setting where the density ratio is accessible.
\citet{vargasTransportMeetsVariational2024} proposed a framework governing many variational Bayesian methods, one part of which is indeed an inclusive KL inference problem.
Several works employ unbalanced transport and its variants for sampling and inference~\citep{luAcceleratingLangevinSampling2019,mroueh_unbalanced_2020,lu2023birth,yanLearningGaussianMixtures2023,gladin2024interaction}.
Instead of approximation via integral operator, a ridge-regression type of gradient flow approximation can also be considered; cf.\ \citep{he2022regularized,zhu2024kernel,nuskenSteinTransportBayesian2024}.
The idea of using local estimator (local regression) for approximating Wasserstein gradient flows was first discussed by this author 
in \cite{zhu2024kernel}
in a different context.
Further implications of our theoretical framework -- including connections to generative modeling (MMD-GANs), the $\chi^2$-divergence WGF, KSD descent as inclusive KL minimization, discrete-time algorithms, and mirror descent -- are presented in Appendix.
One might conjecture that,
standard nonparametric regression results~\citep{tsybakov_introduction_2009,spokoiny2016nonparametric}
can be applied to derive approximation results.
However, proving rigorous $\Gamma$-convergence of gradient flows is 
beyond standard statistical bounds and
mathematically non-trivial. We therefore leave the it to future research.
We refer interested readers to relevant works on $\Gamma$-convergence of gradient flows, e.g., \citep{craig_blob_2023,carrillo2019blob,lu2023birth,zhu2024kernel}.
Recently,
kernelized gradient flows such as SVGD and MMD-WGF have been studied in a more mathematically rigorous way by a few works such as
\citep{chizatQuantitativeConvergenceWasserstein2026,chizatQuantitativeLocalConvergence2026,carrilloSteinVariationalGradient2026}.
We emphasize that the \WGFloc,
which is first proposed in this work,
has already shown great promise and empirical performance in comparison.
A future direction is to study its fine mathematical properties.

\appendix

\section{Additional derivations and proofs}
\label{sec:additional-derivations-and-proofs}

\begin{proof}
    [Proof of Theorem~\ref{thm:equivalence-of-gradient-flow-equations}]
    The verification is a straightforward identification.
    From the right-hand side of \eqref{eq:kernelized-gfe-reverseKL}, we have
    \begin{align*}
        \DIV\left(\mu \nabla \Tkmu \left(1-\frac{\dd \pi}{\dd \mu}\right)\right)
        &=
        \DIV\left(\mu \nabla \left(\int K(x,x')\mu(x') \left(1-\frac{\dd \pi}{\dd \mu}(x')\right)\dd x'\right)\right)
        \\
        &=
        \DIV\left(\mu \nabla \left(\int K(x,x') \left(\mu(x')-\pi(x')\right)\dd x'\right)\right)
        ,
    \end{align*}
    which coincides with the right-hand side of \eqref{eq:wgf-mmd-pde}.
\end{proof}

\begin{proof}
    [Proof of Proposition~\ref{prop:revKL-FR-gf-equiv}]
    The calculation of the flow equation is straightforward via Otto's formalism.
    \begin{align}
        \dot \mu 
        = - \bbK_\FR(\mu) \left(1 - \frac{\dd \pi}{\dd \mu} - Z\right)
    \end{align}
    where $Z$ is the normalization constant.
    Then,
    \begin{align}
        \dot \mu 
        = - \mu  \left(1 - \frac{\dd \pi}{\dd \mu} - Z\right)
        = - (\mu - \pi) 
    \end{align}
    where $Z$ disappears due to that the gradient flow is already mass-preserving.
    Therefore, the flow equation is indeed \eqref{eq:gradient-structure-revKL-he-gf-reaction}.
The ODE solution is obvious.
\end{proof}
\begin{proof}
      [Proof of Theorem~\ref{thm:exponential-decay-of-inclusive-KL-divergence}]
        This is a corollary of the more general result by \citet{mielke2025hellinger}.
        There, they proved that the PL inequality holds for
        a large class of relative entropy functionals including the squared Hellinger distance, the inclusive KL divergence, and the reverse $\chi^2$ divergence.
  Therefore, \eqref{eq:Loj-FR} holds globally.

  Consequently,
        calculating the time-derivative of the
        inclusive KL divergence, we obtain
        \begin{align}
            \frac{\dd }{\dd t} \DKL(\pi | \mu) 
            =
            \langle  1 - \frac{\dd \pi}{\dd \mu}, \dot \mu \rangle
            =
             - \langle  1 - \frac{\dd \pi}{\dd \mu}, \mu\cdot (1 - \frac{\dd \pi}{\dd \mu})\rangle
             \overset{\eqref{eq:Loj-FR}}{\leq} - c\cdot \DKL(\pi | \mu)
            .
        \end{align}
        By Gr\"onwall's Lemma, we obtain the desired estimate.
    \end{proof}

\begin{proof}
    [Proof of Proposition~\ref{prop:MMD-MMD-gf-equiv-to-revKL-HE-gf}]
First, the equivalence between the flow equations is by direct identification -- the flow equations coincide. This is a
consequence of Theorem 3.4 of \citep{gladin2024interaction}.
Then, using this equivalence, the MMD-decay statement follows from Theorem 3.5 of \citep{gladin2024interaction} and their equation (12).
\end{proof}

\begin{proof}
        [Proof of Proposition~\ref{prop:revKL-FR-JKO-via-MMD}]
        We calculate the optimality condition of the following optimization problem~\eqref{eq:revKL-FR-JKO-via-MMD}.
        \begin{align}
            1 - \frac{\dd \pi}{\dd \mu} + \frac1{\eta}\left( 1 -  \frac{\dd \mu^l}{\dd \mu} \right) = 0
            .
        \end{align}
        By the ISPD condition of the kernel $K$, the integral operator $\Tkmu$ is strictly positive-definite.
        Therefore,
        let $\Tkmu$ act on the both sides of the equation above, we have
        \begin{align}
            \Tkmu \left( 1 - \frac{\dd \pi}{\dd \mu} \right) + \frac1{\eta}\Tkmu \left( 1 -  \frac{\dd \mu^l}{\dd \mu} \right) = 0
            ,
        \end{align}
        which coincides with the optimality condition of the variational problem \eqref{eq:mmd-mmd-JKO}
        given Radon-Nikodym derivatives exist.
\end{proof}

\begin{proof}
    [Proof of Corollary~\ref{cor:wfr-gfe-revKL-decay}]
    The proof is by exploiting the inf-convolution structure of the WFR flow.
    By taking the time-derivative of the inclusive KL divergence, we have
    \begin{multline*}
        \frac{\dd }{\dd t} \DKL(\pi | \mu) 
        =
        \langle 1- {\dd \pi}/{\dd \mu}, \dot \mu \rangle
        =  - \alpha \| \nabla \left(1- {\dd \pi}/{\dd \mu}\right) \|^2_{L^2(\mu)} - \beta\| 1- {\dd \pi}/{\dd \mu} \|^2_{L^2(\mu)}
        \\
        \leq  - \beta\| 1- {\dd \pi}/{\dd \mu} \|^2_{L^2(\mu)}
        .
    \end{multline*}
    By the functional inequality \eqref{eq:Loj-FR} in Theorem~\ref{thm:exponential-decay-of-inclusive-KL-divergence}, we obtain the decay result for the inclusive KL functional.
\end{proof}

\begin{proof}
    [Proof of Proposition~\ref{prop:local-estimator-wgf-spokoiny}]
    The linear local-estimator formula is classical, see e.g., \citep[Section 4.6]{spokoiny2016nonparametric}.
    We exploit the force-kernelization formula~\eqref{eq:force-kernelization-of-wasserstein-gradient-flow}, specifically,
    \begin{align}
        \mu  \dFdmut(x)
        =
        \mu - \pi
        \text{ for }
        F(\mu) = \DKL(\pi|\cdot)
        \text{ (i.e., inclusive KL)}
    \end{align}
    The key is to show that the quantities required for the solution of nonparametric regression~\eqref{eq:local-lin-wls-velocity} can be estimated using samples from the gradient flows. We denote $\xi = \dFdmut$ and
the population version of $S_0(x)$ as 
\begin{multline}
    S^\infty_0(x) 
    :=
    \frac{1}{\int\mu K}
    \int\mu\xi K
    =
    \frac{1}{\int\mu K}
    \int \mu_t(x')  \xi(x') K(x' - x) \dd  x'
    \\
    =
    1
    -
    \frac{1}{\int\mu K}
    \int  \pi (x') K(x' - x) \dd  x'
    \approx
    1
    -
    \frac{N}{M}
    \frac{\sum_{i=1}^M K(z_i-x)}
    {\sum_{l=1}^N K(x_l-x)}
    =S_0(x)
\end{multline}
which is a Monte-Carlo estimator using samples of $\mu$ and $\pi$ as given in \eqref{eq:spokoiny-response-moments}. The other quantities for the regression solution can be similarly estimated. Hence we obtain the statement of Proposition~\ref{prop:local-estimator-wgf-spokoiny}.
\end{proof}

\section{Derivation for Fisher-Rao Gaussian and Bures-Wasserstein flows}
\label{sec:FRBW-WGF}

\paragraph{Fisher-Rao Gaussian gradient flow (Proposition~\ref{prop:gvi-gfe})}
The inclusive KL divergence can be written as
\[
\KL(\pi | \rho) = \int \pi(x) \log {\pi(x)}
-\int \pi(x) \log \rho(x)
\, dx = H(\pi | \rho) - H(\pi),
\]
where the cross-entropy is given by
$H(\pi | \rho) = -\int \pi(x) \log \rho(x) \, dx$
and $H(\pi):= - \int \pi(x) \log \pi(x) \, dx$ is the entropy of $\pi$.

Since the entropy term $H(\pi)$ does not involve $\rho$,
we only need to consider the cross-entropy term $H(\pi | \rho)$ when taking derivatives w.r.t. parameters of $\rho$.
A direct calculation yields the derivative formula
(noting that $\rho = N(\mu, \Sigma)$):
\[
\nabla_\mu H(\pi | \rho) = \Sigma^{-1} (\mu - m_\pi),
\quad
    \nabla_\Sigma H(\pi | \rho)
    =
    \frac{1}{2} \Sigma^{-1}\left(  
     \Sigma_\pi - \Sigma
      + 
       \left(m_\pi - \mu\right)\left(m_\pi - \mu\right)^T
     \right)
    \Sigma^{-1}
    ,
\]
with the notation $m_\pi, \Sigma_\pi$ defined as in \eqref{eq:gvi-gfe-notation} for $\pi$ that is not necessarily Gaussian.

To use the above formula to derive the gradient flow equation in the Fisher-Rao and Bures-Wasserstein geometry,
we use the gradient structure for the Gaussian manifold derived
by \citet{lieroEvolutionGaussiansHellingerKantorovichBoltzmann2025}, namely
the following explicit formula:

\begin{theorem}
    [Corollary of \citet{lieroEvolutionGaussiansHellingerKantorovichBoltzmann2025}]
    The Gaussian gradient flow equation is given by
    the system of ODEs:
    \begin{align}
        \dot m &= - \bbK_i^\mathrm{\text{mean}} \nabla_m \KL(\pi | \rho) \\
        \dot \Sigma &= - \bbK_i^\mathrm{\text{cov}} \nabla_\Sigma \KL(\pi | \rho)
    \end{align}
    where $i\in \{\OT, \FR\}$ for the Bures-Wasserstein and Fisher-Rao Onsager operator:
    \begin{align*}
        \bbKotto^\mathrm{\text{mean}}(m, \Sigma) x &= x, &
        \bbKotto^\mathrm{\text{cov}}(m, \Sigma) X &= 2\left(X\Sigma + \Sigma X\right), \\
        \bbK_\FR^\mathrm{\text{mean}}(m, \Sigma) x &= \Sigma x, &
        \bbK_\FR^\mathrm{\text{cov}}(m, \Sigma) X &= 2\Sigma X \Sigma.
    \end{align*}
\end{theorem}
Using the derivative formula and the Onsager operator, we directly obtain the Fisher-Rao Gaussian gradient flow equation in Proposition~\ref{prop:gvi-gfe}.

\begin{proof}
    [Proof of Proposition~\ref{cor:fr-ode-solution}]
    We observe that the mean equation in \eqref{eq:gvi-gfe} is decoupled from the covariance variable.
    Hence, 
    the solution to the mean equation is obvious:
    $$
    m_t = (1 - e^{-t}) m_\pi + e^{-t} m_0.
    $$
    Plugging the mean solution into the covariance ODE, we have
    $$
    \dot \Sigma_t 
    =
    -\left(   {\Sigma_t} - \Sigma_\pi - 
    e^{-2t} (m_0 - m_\pi)(m_0 - m_\pi)^T
    \right)
    .
    $$
    Multiplying both sides by $e^t$, we have
    \begin{align}
    \label{eq:covariance-ode-intermediate}
    e^t \dot \Sigma_t 
    =
    -e^t \left(   {\Sigma_t} - \Sigma_\pi - 
    e^{-2t} (m_0 - m_\pi)(m_0 - m_\pi)^T
    \right)
    .
    \end{align}
    Using the product rule,
    \begin{align}
    \label{eq:product-rule-covariance}
    \frac{d}{dt} (e^t \Sigma_t)
    =
    e^t \dot \Sigma_t + e^t \Sigma_t
    .
    \end{align}
Combining \eqref{eq:covariance-ode-intermediate} and \eqref{eq:product-rule-covariance}, we have
\begin{align}
    \frac{d}{dt} (e^t \Sigma_t)
    =
    e^t   \Sigma_\pi + 
    e^{-t} (m_0 - m_\pi)(m_0 - m_\pi)^T
    .
    \label{eq:covariance-ode-solution}
\end{align}
Integrating both sides, we have
\begin{align}
    e^t \Sigma_t
    =
    \Sigma_0 +
    (e^t - 1) \Sigma_\pi + 
    (1 - e^{-t}) (m_0 - m_\pi)(m_0 - m_\pi)^T
    .
\end{align}
Dividing both sides by $e^t$, we obtain the desired result.
\end{proof}

\section{Further background on Wasserstein gradient flows}
\label{sec:wasserstein-gradient-flows-appendix}
We provide further background on Wasserstein gradient flows, especially on the pseudo-Riemannian structure of the Wasserstein space.

The Onsager operator, as well as the Riemannian metric tensor $\mathbb G_\OT = \bbKotto^{-1}$, induces a duality pairing between the tangent and cotangent spaces.    
We use the unweighted space for simplicity.
Note that the calculation can also be made in the weighted space $L^2(\rho)$.
    \begin{align}
        \text{duality pairing: }
        {}_{\text{dual}}\langle \xi, \bbKotto(\rho) \ \zeta \rangle_{\text{primal}}
        =
        \langle {\xi}, \bbKotto(\rho) \ \zeta \rangle_{L^2}
        =
        \int \xi \cdot \bbKotto(\rho) \ \zeta
        .
        \label{eq:duality-pairing-wasserstein}
    \end{align}

The Stein geometry can also be characterized
in this way.
\citet{duncan2019geometry}
proposed the following Onsager operator that is a modification of the Otto's Wasserstein formalism,
\begin{align}
    \bbK_\textrm{Stein}(\rho): 
    T^*_\rho \calM \to T_\rho \calM, \xi \mapsto
    -\DIV(\rho\cdot \ID \circ \Tkrho \nabla \xi)
    .
    \label{eq:onsager-stein}
\end{align}
The resulting Stein gradient flow equation is given by
  \begin{align*}
    \partial_t \mu 
    = - \bbK_\textrm{Stein}(\mu) \log \frac{\dd\mu}{\dd\pi}
    =\DIV \left(\mu {\K_\mu}\nabla \left(V + \log\mu\right) \right) 
    .
\end{align*}

We now look at the Wasserstein gradient flow of the inclusive KL divergence.
    The gradient flow equation can be given by the Otto's formal calculation,
    $$
    \nabla _\OT \DKL(\pi \| \mu) =\  
     \bbKotto \partial \DKL(\pi \| \mu)
    = - \DIV\left(\mu \nabla \left(1-\frac{\dd \pi}{\dd \mu}\right)\right)
    $$
    where $\bbKotto$ is the Wasserstein Onsager operator,
    \ie the inverse of the Riemannian metric tensor $\mathbb G_\OT$ of the Wasserstein manifold.

        A standard characterization of the Wasserstein gradient flow is the following
        energy dissipation equality
        in the inclusive KL setting
        \begin{align}
            \DKL(\pi | \mu_t) - \DKL(\pi | \mu_s)
            =
            - \int_s^t
            \left\|
            \nabla \left(1-\frac{\dd \pi}{\dd \mu_r}\right)
            \right\|_{L^2(\mu_r)}^2 \dd r
            .
            \label{eq:ede-rkl-wgf}
        \end{align}
        The dissipation of the inclusive KL divergence energy,
        a.k.a. the production of the relative entropy,
        equals
        the integral of the Sobolev norm of the differential of the inclusive KL along the curve $\mu_r$.
For completeness, we provide a standard characterization via the following differential energy dissipation equality
\begin{multline}
    \frac{\mathrm{d}}{\mathrm{d}t} \DKL(\pi | \mu_t)
    =
    \langle 1 - \frac{\dd \pi}{\dd \mu}, \dot \mu_t \rangle
    =
    \langle 1 - \frac{\dd \pi}{\dd \mu}, \bbKotto \partial 
    \DKL(\pi | \mu)\rangle
    \\
    =
    \langle 1 - \frac{\dd \pi}{\dd \mu}, \DIV(\mu \nabla  \left(1-\frac{\dd \pi}{\dd \mu}\right))\rangle
    =
    -\bigg\|
    \nabla \left(1-\frac{\dd \pi}{\dd \mu}\right)
    \bigg\|_{L^2(\mu)}^2
    .   
\end{multline}
Integrating both sides,
the integral form of EDE is then given by
\eqref{eq:ede-rkl-wgf}.

\section{Further background on Fisher-Rao/Hellinger gradient flows}
We provide further background on Fisher-Rao and Hellinger gradient flows, especially on the technicalities of the Hellinger flows over positive measures $\Mplus$.

We first consider the Hellinger flow of the exclusive (reverse) KL divergence over the positive measures $\Mplus$.
Its gradient flow equation, the reaction equation, is given by
        \begin{align}
            \dot \mu = - \mu\cdot  \log \frac{\dd \mu}{\dd \pi}
            .
            \label{eq:fr-gfe}
        \end{align}
The gradient structure is given by
    \begin{align}
        \begin{cases}
            \textrm{{Space} :}& 
            \text{positive measures } \Mplus  
            \\
            \textrm{
                 Energy functional} :& {
                     \text{exclusive KL: } \DKL( \cdot | \pi)}
            \\
            \textrm{
                 Dissipation Geometry} :& {
                    \text{Hellinger}}
        \end{cases}
        \label{eq:gradient-structure-fkl-he}
    \end{align}

        One can further restrict the gradient flow to the probability measures by modifying the dynamics in \eqref{eq:fr-gfe} with a projection onto the probability measures,
        \ie
        \begin{align}
            \dot \mu = - \mu\cdot\left(
                \log \frac{\dd \mu}{\dd \pi}
                -
            \int \log \frac{\dd \mu}{\dd \pi} \dd \mu
                \right)
            .
            \label{eq:fr-gfe-prob}
        \end{align}
        The resulting ODE is the gradient flow equation over the Fisher-Rao manifold of the probability measures, also known as the spherical Hellinger manifold~\citep{LasMie19GPCA,mielke2025hellinger}.
        That is, it has the following gradient structure:
        \begin{align}
            \begin{cases}
                \textrm{{Space} :}& 
                \text{probability measures } \calP 
                \\
                \textrm{ Energy functional} :& {\text{exclusive KL: } \DKL( \cdot | \pi)}
                \\
                \textrm{ Dissipation Geometry} :& { \text{spherical Hellinger a.k.a. Fisher-Rao}}
            \end{cases}
            \label{eq:gradient-structure-revKL-SHe-gf-pure}
        \end{align}

For the inclusive (forward) KL divergence,
as discussed in the main text,
the Hellinger flow over the positive measures $\Mplus$ coincides with the Fisher-Rao flow over the probability measures $\calP$, given the same initial condition.
Specifically,
the Hellinger gradient structure over the positive measures $\Mplus$ is given by
    \begin{align}
        \begin{cases}
            \textrm{{Space} :}& 
            \text{positive measures } \Mplus 
            \\
            \textrm{Energy functional: inclusive}& {F(\cdot):= \DKL(\pi | \cdot)}
            \\
            \textrm{Dissipation Geometry} :& { \text{Hellinger}}
        \end{cases}
        \label{eq:gradient-structure-revKL-he-gf-pure}
    \end{align}
    This flow actually contains Fisher-Rao flow over probability
    if initialized as probability measures.

For the Hellinger flows,
an interesting and known analysis result is that the following Polyak-\Loj functional inequality cannot hold globally for
    exclusive KL divergence functional:
    \begin{align}
        \biggl\|\log \frac{\dd\mu}{\dd \pi}\biggr\|^2_{L^2_{\mu}}
        \geq 
        c\cdot \operatorname{\mathrm{D}_\mathrm{KL}}(\mu\|\pi)
        \text{ for all } \mu \in \Mplus
        .
        \label{eq:local-KL-LSI-Loj}
    \end{align}
    Inequality~\eqref{eq:local-KL-LSI-Loj} differs from the typical log-Sobolev inequality
    in that no Sobolev norm is involved.
    As a consequence of the works of \citet{mielke2025hellinger,carrillo2024fisher}, we obtain the following lemma regarding the property of the Hellinger flows of the exclusive KL. This is in sharp contrast to the case of the inclusive KL as discussed in the main text.
    \begin{lemma}
        [No global PL condition in Hellinger or Fisher-Rao flows of KL]
        There
        exists no $c>0$ such that \eqref{eq:local-KL-LSI-Loj} holds
        for all positive measures $\mu\in \Mplus$.
        \label{lm:local-vs-global-Loj}
    \end{lemma}
    We note that
    the same (negative) result holds for
    the exclusive-KL-Fisher-Rao gradient flow over the probability measures $\calP$.

\section{Other implications on applications}
\label{sec:implications}
    \subsection{Generative modeling}
    \label{sec:gen-model}
    There has been a surge of interest in formulating GANs in the fashion of Wasserstein gradient flows.
    Promising
    empirical results have been reported by
    \citet{ansariRefiningDeepGenerative2021,yiMonoFlowRethinkingDivergence2023,yiBridgingGapVariational2023,franceschiUnifyingGansScorebased2024,hengGenerativeModelingFlowGuided2024}.
    In addition, there have also been
    a series of paper that present theoretical analysis of
    GAN training dynamics as
    \emph{interacting gradient flows} by,
    \eg
    \citet{hsiehFindingMixedNash2018,domingo-enrichMeanfieldAnalysisTwoplayer2020,wangExponentiallyConvergingParticle2022,wangLocalConvergenceGradient2023,pmlr-v238-dvurechensky24a}.
    Using this paper's insight, we now uncover connections between generative models and the Wasserstein gradient flow of the inclusive KL functional.
    
    Our starting point is the standard divergence-based generative modeling training, which solves the optimization problem
    \begin{equation}
        \min_{\theta} \DKL(\pi_\textrm{data} | {g_\theta}_{\#}P_Z)
        ,
        \label{eq:gan-inclusive-KL-minimization-main}
    \end{equation}
    where $P_Z$ is the latent variable distribution, \eg standard Gaussian, and ${g_\theta}_\#$ is the
    push-forward operation
    using a
    generator network $g_\theta$.
    Following our force-kernelized WGF framework as in \eqref{eq:kernelized-gfe-reverseKL}, consider a force-kernelized projected gradient flow for the inclusive KL minimization~\eqref{eq:gan-inclusive-KL-minimization-main}.
    \begin{align}
        \dot \theta
        = - \Pi_\Theta \left( - \DIV \left( \mu_\theta \nabla \calT_{K, \mu_\theta}\left(1 -  \frac{d\pi}{d\mu_\theta}\right)  \right) \right),\quad {\mu_\theta = {g_\theta}_{\#}P_Z}
        \label{eq:theta-pde-inclusive-KL-wgf-main}
    \end{align}
    where
    $\Pi_\Theta$ is the projection (of the Riemannian gradient) onto the parameter space $\Theta$.
    From \eqref{eq:theta-pde-inclusive-KL-wgf-main},
    we immediately observe that
    the flow can be written as
    \begin{align}
        \dot \theta
        = - \Pi_\Theta \left( - \DIV \left( \mu_\theta
        \nabla f^*(x)
        \right)
        \right)
        , \quad
        f^*(x)
        =
        \int  K(x', x) \left(\mu_\theta(x') -  {d\pi}(x')\right)
        \dd x'
        .
        \label{eq:theta-pde-inclusive-KL-wgf-main-opt-test}
    \end{align}
    Discretizing the dynamics, we have
    \begin{align}
        \theta^{l+1} \gets \theta^l - \eta^l \Pi_\Theta \left(
            - \DIV \left( \mu_\theta
        \nabla f^*(x)
        \right)
        \right)
    \end{align}
    which corresponds to the training dynamics of MMD-GANs
    \citep{liGenerativeMomentMatching2015,dziugaiteTrainingGenerativeNeural2015,liMmdGanDeeper2017,binkowskiDemystifyingMmdGans2018}
    using the optimal test function $f^*$.
    The insight of our paper is that,
    through the lens of \eqref{eq:theta-pde-inclusive-KL-wgf-main},
    we can view the MMD-GAN type generative model training dynamics as
    performing inclusive KL inference.
    
\subsection{\citet{chewiSVGDKernelizedWasserstein2020}'s sampler based on kernelized WGF of $\chi^2$-divergence}
    Previously,
    \citet{chewiSVGDKernelizedWasserstein2020} proposed a kernelized Wasserstein gradient flow of the $\chi^2$-divergence.
    They considered the kernelized gradient flow equation
    $\displaystyle \dot \mu =   \DIV\left(\mu \K_\mu \nabla \frac{\dd \mu}{\dd \pi} \right)$,
        where, technically, $\K_\mu $ should be taken as the integral operator defined by
        $\K_\mu f = \ID \circ \Tkmu$.
        However, in their implemented algorithm, they switched the order of the operators $\nabla$ and $\K_\mu$, either as a heuristic or practical means.
        That is, what they actually implemented (in \citep[Section~4]{chewiSVGDKernelizedWasserstein2020}) is
        \begin{equation}
            \dot \mu = \DIV\left(\mu \nabla \K_\pi  \frac{\dd \mu}{\dd \pi} \right)
            .
            \label{eq:chewi-actually-did}
        \end{equation}
    
        From this paper's perspective,
        this is kernelizing the generalized dual-force in our definition, rather than the velocity function $\nabla \left(\frac{\dd \mu}{\dd \pi}- 1 \right)$.
    We can now derive a force-kernelized WGF of the $\chi^2$-divergence from the first principle.
    Consider the gradient flow equation
    \begin{align}
        \dot \mu =   \DIV\left(\mu \nabla \calT_{K, \pi} \left(\frac{\dd \mu}{\dd \pi} - 1\right)\right)
        .
        \label{eq:chi-square-flow}
    \end{align}
    Note that the integral operator $\calT_{K, \pi}$ is associated with the target measure $\pi$, rather than the measure $\mu$ as in \eqref{eq:kernelized-gfe-reverseKL}.
    Nonetheless, a simple observation is that \eqref{eq:chi-square-flow} formally coincides with \eqref{eq:kernelized-gfe-reverseKL}.
    Therefore, we conclude that a \emph{principled force-kernelized flow of the $\chi^2$-divergence WGF \eqref{eq:chi-square-flow} is equivalent to the WGF of the MMD studied by \citep{arbel_maximum_2019,korbaKernelSteinDiscrepancy2021}}, which is straightforward to implement and in contrast to using the ad-hoc scheme of \citep{chewiSVGDKernelizedWasserstein2020}.

    \subsection{Kernel Stein discrepancy descent as inclusive KL minimization}
    \label{sec:ksd-as-ikl}
    The original implementation of \eqref{eq:wgf-mmd-pde}
    by \citet{arbel_maximum_2019}
    suffers from a few drawbacks such as mode collapse or slow convergence.
    Instead of the MMD,
    authors such as
    \citet{korbaKernelSteinDiscrepancy2021,chenSteinPoints2018,barp2019minimum}
    advocated for minimizing the kernel Stein discrepancy (KSD)~\citep{gorham2017measuring,liuKernelizedSteinDiscrepancy2016,chwialkowskiKernelTestGoodness2016}
    for inference.
    From the optimization perspective,
    we replace the MMD objective with the KSD objective
    $ \frac12\ksd^2(\mu | \pi)$.
    The KSD can be viewed
     as a special case of the MMD associated with the Stein kernel~\citep{gorham2017measuring,liuKernelizedSteinDiscrepancy2016,chwialkowskiKernelTestGoodness2016}.
    The Wasserstein gradient flow equation of the KSD can be straightforwardly calculated as noted by \citet{korbaKernelSteinDiscrepancy2021}.
    \begin{align}
        \dot \mu = \DIV(\mu \cdot \int\nabla_2 \spi(x, \cdot ) \dd \mu(x))
        ,
        \tag{KSD-WGF}
        \label{eq:wgf-ksd-pde}
    \end{align}
    where $\spi$ is the Stein kernel.
    Unlike \eqref{eq:wgf-mmd-pde},
    to implement a discrete-time algorithm that simulates \eqref{eq:wgf-ksd-pde},
    we only need to evaluate the score function $\nabla \log \pi$ without needing the samples from $\pi$.
    As KSD can be viewed as a special case of MMD with the Stein kernel,
    using our characterization of the MMD-WGF in Theorem~\ref{thm:equivalence-of-gradient-flow-equations},
    we obtain the following insight
    that views the KSD minimization also as inclusive KL inference:
    \begin{corollary}
      [Formal equivalence between KSD-WGF and inclusive KL inference]
      The WGF equation of KSD~\eqref{eq:wgf-ksd-pde} is equivalent to
      \eqref{eq:kernelized-gfe-reverseKL},
      which is the kernelized WGF of the inclusive KL divergence energy functional when the Stein kernel $\spi$ is used.
      \label{cor:equivalence-of-gradient-flow-equations-ksd}
    \end{corollary}

    \subsection{Discrete-time algorithms for inference and sampling via kernelized Wasserstein flows}
    Much of this works is concerned with the continuous-time perspective.
    For completeness, we now discuss the discretized gradient flows and their practical implications.
    Suppose our goal is to construct a computational algorithm to approximate a target distribution $\pi$ via the inclusive KL minimization~\eqref{eq:inclusiveKL}.
    We consider two settings:
    \textbf{(1)} we have access to samples from the target $y^i\sim \pi$, e.g., in generative modeling;
    \textbf{(2)} we have access to the score function $\nabla \log \pi$, e.g., in inference and sampling.
    Our scheme is based on discretizing the force-kernelized Wasserstein gradient flow equation~\eqref{eq:kernelized-gfe-reverseKL}, obtaining the discrete-time update scheme
    \begin{equation}
        X_{t+1} = X_t - \tau
        \nabla
        \int \nabla_2 K(x', x ) \frac{\delta F}{\delta \mu}[\mu_t] (x') \, \dd \mu_t(x')
        .
        \label{eq:kernelized-wgf}
    \end{equation}
    An interacting particle system
    can be
    simulated by considering
    particle approximation to the measure,
    $\mu = \frac1n \sum_{i=1}^{n}\delta_{x_i},\ x_i\in\bbR^d$.
    \paragraph{Setting (1): sample-based setting with flows of MMD}
    In general, for Wasserstein gradient flow of the energy functional $F$,
    one may implement a practical algorithm
    that discretizes the PDE~\eqref{eq:wasserstein-gfe}.
    As discussed in the beginning of Section~\ref{sec:gradient-flows},
    in the vanilla Wasserstein gradient flow of the inclusive KL divergence \eqref{eq:vanilla-wasserstein-rkl-gfe},
    the velocity field $\nabla  \left(1- \frac{\dd \pi}{\dd \mu_t}(X_t)
        \right) $
        cannot be implemented out of the box.
        Based on Theorem~\ref{thm:equivalence-of-gradient-flow-equations},
        we now resort to \eqref{eq:kernelized-wgf} which we have shown to be algorithmically equivalent to \citet{arbel_maximum_2019}'s algorithm which they termed MMD-flow.
        This amounts to simulating (in discrete time) an interacting particle system:
    \begin{equation}
        X_{t+1}^i = X_t^i - \tau
        \left(
            \frac{1}{N} \sum_{j=1}^N \nabla _2 K(X_t^j, X_t^i)
            -
            \frac{1}{M} \sum_{j=1}^M \nabla _2 K(Y_t^j, X_t^i)
            \right)
        ,
        \label{eq:kmv-gd-disc-time-sample-mmd}
    \end{equation}
    where $X_t^i$ are samples from the distribution $\mu_t$;
    cf. \citep{arbel_maximum_2019}
    for the experimental results.
    
    \paragraph{Setting (2): score-based setting with flows of KSD}
    In variational inference and sampling, we typically have access to the target $\pi$ in the form of the score function $\nabla \log \pi$ without samples.
    Discretizing the PDE~\eqref{eq:wgf-ksd-pde}, we have
    \begin{equation}
        X_{t+1} = X_t - \tau
        \int \nabla _2 \spi(x, \cdot) \dd \mu(X_t)
        .
        \label{eq:kmv-gd-disc-time-ksd}
    \end{equation}
    A sample-based implementation of the above algorithm is then given by
    \begin{equation}
        X_{t+1}^i = X_t^i - \tau
        \left(
            \frac{1}{N} \sum_{j=1}^N \nabla _2 \spi(X_t^j, X_t^i)
            \right)
        .
        \label{eq:kmv-gd-disc-time-sample-ksd}
    \end{equation}

    \subsection{Continuous optimization: discrete-time mirror descent}
    \label{sec:mirror-descent}
    There have many studies using mirror descent of the exclusive KL divergence such as \citep{karimiSinkhornFlowMirror2024,chopinConnectionTemperingEntropic2024,aubin-frankowskiMirrorDescentRelative2022}.
    We now provide the details of inclusive KL minimization  via mirror descent.
    Consider an explicit Euler scheme
    \begin{align}
        \min _{\rho \in \calP} \langle \partial \DKL(\pi | \rho), \rho \rangle + \frac1\tau \DKL( \rho | \rho^l)
        .
    \end{align}
    where $\langle, \rangle$ is the $L^2$ inner product.
    Using the
    optimality condition of this optimization problem, we can derive the following mirror descent update:
    \begin{align}
        \rho^{l+1} (x) \gets
        \frac1{Z^l}
        \rho^l (x) \cdot \exp \left( - \tau
            \left(
                1 - \frac{\dd \pi}{\dd \rho^l}
            \right)
        \right)
        \text{ for all } x \in \bbR^d
        ,
    \end{align}
    where $Z^l$ is the normalization constant.
    Other than simple finite-dimensional cases in the optimization literature,
    this update is infinite-dimensional and not implementable in practice.
    We now apply
    the kernel approximation of this paper to obtain
    kernelized mirror descent updates:
    \begin{multline*}
        \rho^{l+1} (x) \gets
        \frac1{Z^l}
        \rho^l (x) \cdot \exp \left( - \tau
        \cdot
        \calT_{K, \rho^l}
            \left(
                1 - \frac{\dd \pi}{\dd \rho^l}
            \right)
        \right)
        \\
        =
        \frac1{Z^l}
        \rho^l (x) \cdot \exp \left( - \tau
        \cdot
        \int K(x, y)
        \left(
            \rho^l(y) - \pi(y)
            \right)
            \dd y
        \right)
        .
    \end{multline*}
    
    Similarly, using Stein's method,
    we can also perform update only via the score function of the target $\nabla \log \pi$.
    \begin{align}
        \rho^{l+1}  \gets
        \frac1{Z^l}
        \rho^l  \cdot \exp \left( - \tau
            \cdot
            \int \spi (x, y) \rho^l(y) \dd y
        \right)
        .
    \end{align}
    We refer to recent works such as
    \citep{lazicPropagationChaosFisherRao2026,zhu2024kernel}
    for detailed discussions on
    kernelized Fisher-Rao gradient flows.

\section{Further numerical results}
We now test the behavior of $\WGFloc$ under different bandwidths.
We measure the final discrepancy by the energy distance, which is proportional to the MMD associated with the distance-induced kernel
$k(x,y)=\frac12\bigl(\|x-x_0\|+\|y-x_0\|-\|x-y\|\bigr)$
for a fixed base point $x_0$.
Figure~\ref{fig:local-mmd-robustness} reports two bandwidth sweeps.  In the top row, the local-estimator flow is run with Gaussian, Laplace, inverse-multiquadric, and Student-$t$ kernels.  The final error remains low over a wide range of bandwidths.
The Laplace and Student-$t$ kernels are nearly bandwidth-invariant on the tested interval $h\in[0.6,5]$, while the Gaussian kernel eventually loses locality as $h$ grows.
The bottom row of Figure~\ref{fig:local-mmd-robustness} compares the local-estimator flow with the vanilla MMD-WGF of \citet{arbel_maximum_2019} across Gaussian-kernel bandwidths.  The vanilla MMD-WGF exhibits a pronounced U-shaped sensitivity curve, whereas the local-estimator flow maintains low error over a substantially wider bandwidth range.  The improvement is most visible on the anisotropic banana target.
\begin{figure}[htbp]
    \centering
    \includegraphics[width=\textwidth]{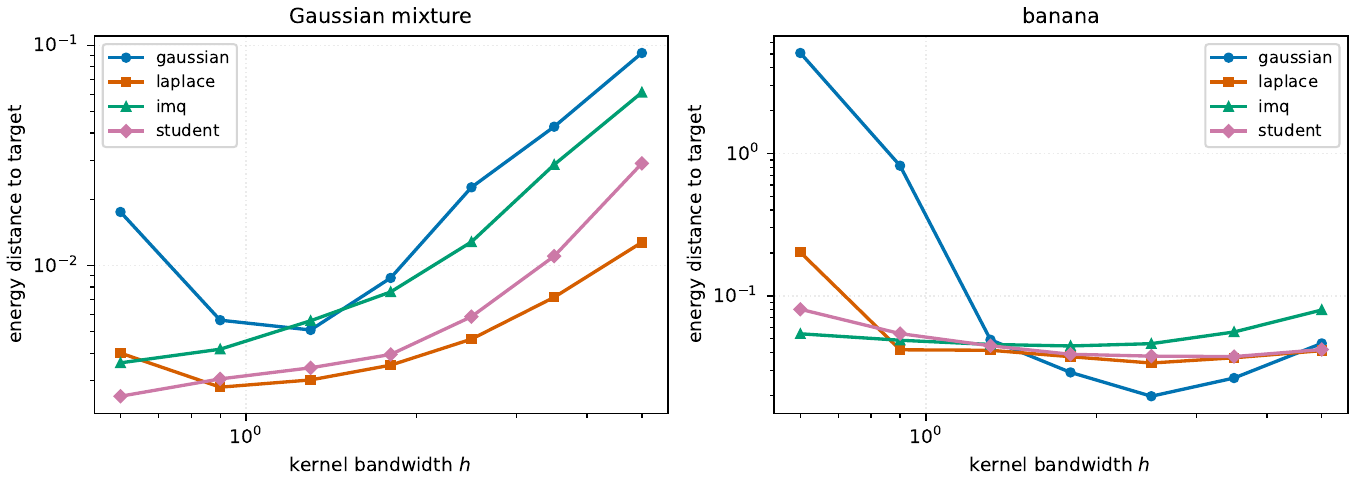}\\[3pt]
    \includegraphics[width=\textwidth]{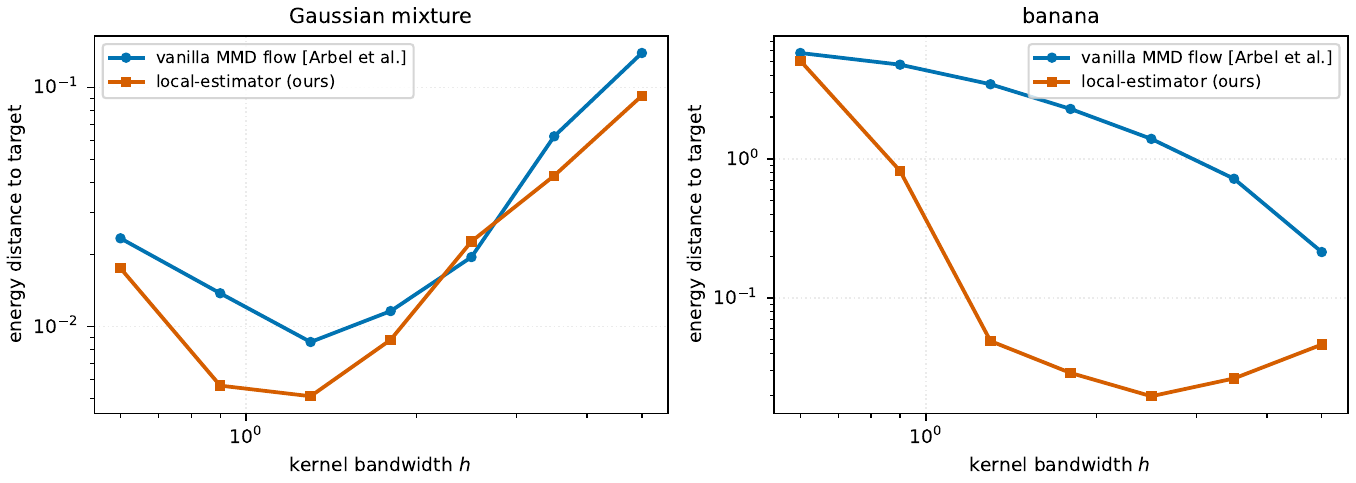}
    \caption{Bandwidth robustness, measured by final energy distance to the target.  Top: local-estimator flow with four kernels on the Gaussian-mixture target (left) and banana target (right).  Bottom: local-estimator flow versus the vanilla MMD flow of \citet{arbel_maximum_2019} across Gaussian-kernel bandwidths.}
    \label{fig:local-mmd-robustness}
\end{figure}

\clearpage

\section*{Acknowledgements}
The author thanks
Bharath Sriperumbudur,
Wittawat Jitkrittum,
Thomas M\"ollenhoff
for the helpful discussions on related topics.
The author also thanks Vladimir Spokoiny and Alexander Mielke,
whose works and suggestions greatly inspired this work.
For the numerical simulations,
Anthropic Claude was used for assistance with the code development.
The author assumes responsibility for all content.
The author has no relevant financial or non-financial interests to disclose.
\small
\bibliographystyle{siamplain}
\bibliography{ref}
\end{document}